\newcommand{\E}{E}
\newtheorem{theorem}{Theorem}
\newtheorem{definition}{Definition}
\newtheorem{observation}{Observation}
\DeclareMathOperator*{\argmin}{arg\,min}
\DeclareMathOperator*{\argmax}{arg\,max}
\newcommand{\la}{$L$-updates policy }
\begin{document}
%
\title{Minimizing Maximum Regret \\ in Commitment Constrained Sequential Decision Making}
\author{
Qi Zhang\\
University of Michigan\\
qizhg@umich.edu\\
\And
Satinder Singh\\
University of Michigan\\
baveja@umich.edu\\
\And
Edmund Durfee\\
University of Michigan\\
durfee@umich.edu\\
}
\maketitle
\begin{abstract}
In cooperative multiagent planning, it can often be beneficial for an agent to make commitments about aspects of its behavior to others, allowing them in turn to plan their own behaviors without taking the agent's detailed behavior into account. Extending previous work in the Bayesian setting, we consider instead a worst-case setting in which the agent has a set of possible environments (MDPs) it could be in, and develop a commitment semantics that allows for probabilistic guarantees on the agent's behavior in any of the environments it could end up facing. Crucially, an agent receives observations (of reward and state transitions) that allow it to potentially eliminate possible environments and thus obtain higher utility by adapting its policy to the history of observations. We develop algorithms and provide theory and some preliminary empirical results showing that they ensure an agent meets its commitments with history-dependent policies while minimizing maximum regret over the possible environments.
\end{abstract}

\section{Introduction}
When planning jointly, agents can benefit from making commitments to each other about what they will (or won't) do that affects another agent, so that other agents can form their own plans accordingly.
In the ideal case, commitments by an agent could allow the other agents to plan their behavior completely independently by relying on the commitments. For example, an agent could commit to free up a tool for another agent to use by a certain time and assuming that the only interaction among the two agents is the use of the tool, this can allow the other agent to plan independently. 

Some existing computational models of commitments characterize them using formal logic~\cite{CohenLevesque,Castelfranchi1995,Singh1999,Mallya2003,Chesani2013,Al-Saqqar2014}. When there is uncertainty about the consequences of actions, logical formulations associate conventions and protocols for managing such uncertainty~\cite{Jennings,xing2001formalization,Winikoff2006}.
An alternative means of handling uncertainty, as in this paper, is to formalize commitments in decision-theoretic settings and explicitly allow for probabilistic guarantees of outcomes~\cite{xuan1999incorporating,Bannazadeh2010,witwicki2009commitment}.

An interesting challenge in making and keeping commitments arises when the committing agent expects to learn information about its environment while executing its plan. What should a probabilistic commitment mean in such a setting? Recently Zhang et al.~\shortcite{zhang2016} provided an answer to this question in sequential decision problems where the committing agent interacts with an environment modeled as a controlled Markov process with a prior distribution over possible reward functions and has already made a probabilistic commitment to achieve a state at a certain time. The committing agent observes rewards while taking actions and thereby can refine its distribution over possible reward functions after each action. They formalize the meaning of a probabilistic commitment as requiring the agent to ``execute a policy from the initial state that
properly affects the committed state variables in expectation'' (where this expectation is over both stochastic transitions and the effect of stochastic reward observations to the agent's knowledge during plan execution).

Our main contributions in this paper are to extend the work of Zhang et al. to the worst-case non-Bayesian setting in which the agent knows that the sequential decision making task it is facing is from one of a set of Markov Decision Processes (MDPs), where both reward and transition dynamics could differ across MDPs, and nonetheless guarantees, at least, the same commitment probability in all MDPs.
We propose a family of policy construction methods for the committing agent that adopts maximum regret as the performance criterion.
We 
prove that policies constructed by the proposed methods respect this commitment semantics, and through experimental results we find they significantly outperform some baseline policies, such as the greedy policy that picks the next action minimizing myopic regret. 

\section{Example Domain}
For illustrative purposes, we first present a two-state example, Twin-States, before we formalize the general problem.
The Twin-States domain consists of two states with known deterministic transition dynamics but uncertain reward, as shown in Figure \ref{fig:TwinStates}.
The start state is A and the agent has three actions in each of the two states.
Action $a_0$ moves the agent to the other state with no reward, while actions $a_1$ and $a_2$ keep it in the original state.
Action $a_1$'s reward is 2 in state A and 3 in state B.
Action $a_2$'s reward can be any element of the set $\{1, 3 ,5\}$ in state A, and can be any element of set $\{0, 2, 4\}$ in state B.
The agent commits to being in state A at the time horizon with probability one.

\begin{figure}[ht]
\centering
\includegraphics[scale=0.8]{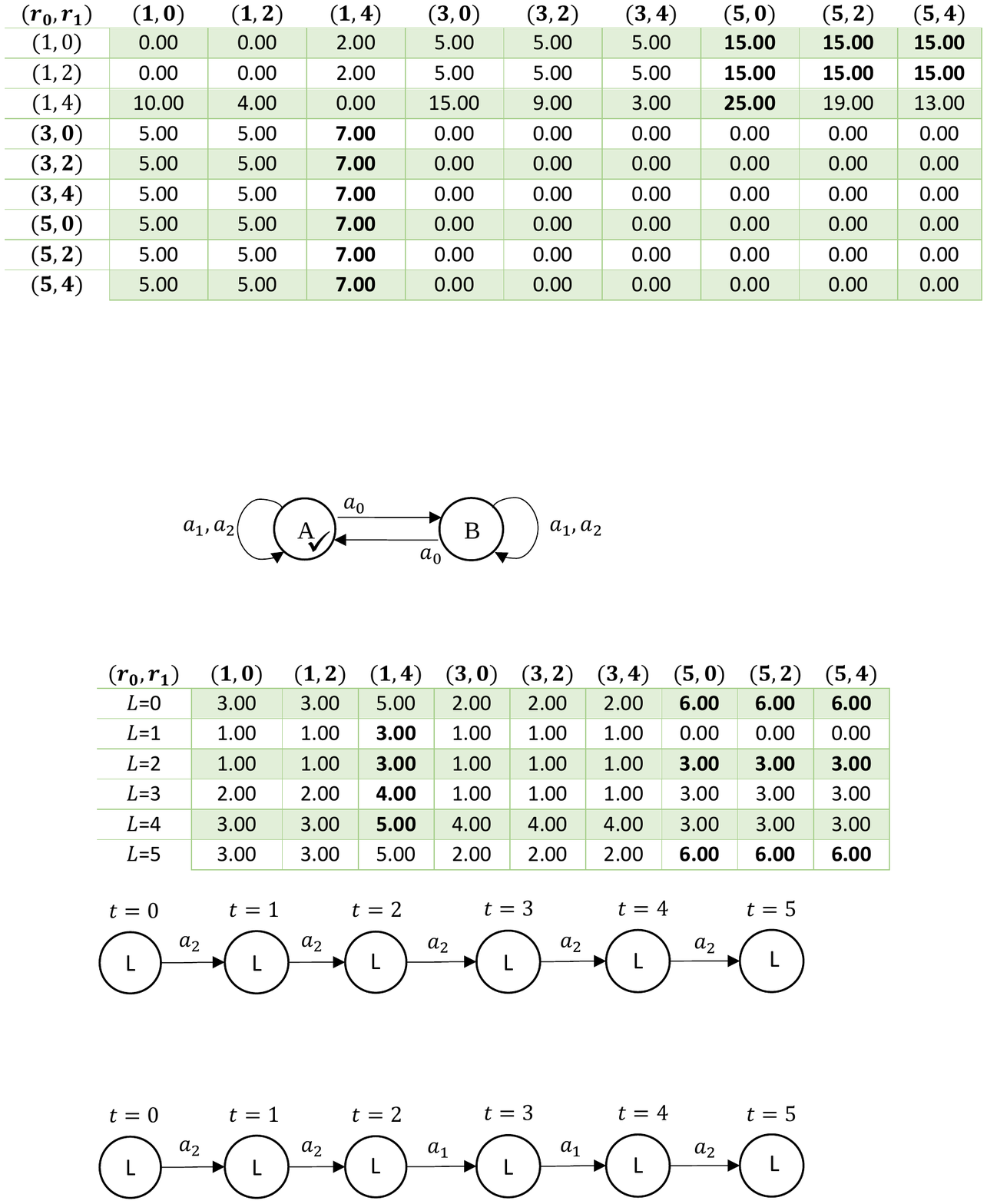}
\caption{Twin-States. See text for details.
}
\label{fig:TwinStates} 
\end{figure}

Action $a_2$ could be more rewarding than action $a_1$, or less. 
If the horizon is large enough, the agent has enough time to figure out the reward of action $a_2$ in both states, and depending on the observed rewards it will thereafter have a clear preference for one state-action.
This behavior has two interesting properties.
First, it
chooses actions based on the 
previous observations (i.e., history). Reward for actions that the agent has already taken helps it choose actions wisely in the future.
Second, it is constrained by the commitment. At the time step just before the time horizon in the Twin-States domain, if the agent is in state B it should take action $a_0$, otherwise it is in state A and should not take action $a_0$. In our experimental work below, we will show how our proposed methods can solve the Twin-States problem to compute such policies.

\section{Problem Formulation}
We consider settings in which an agent knows that its sequential decision making problem is one out of $K$ possible MDPs but does not know which MDP it is in at the start. 
We assume that all $K$ MDPs have the same state and action spaces but possibly different transition and reward functions.
During execution, the agent can observe the state and the reward and this can provide information about the MDP it is in.
The environment is formally defined by the tuple $\mathcal{E}=\langle \mathcal{S}, \mathcal{A}, \{P(k), R(k)\}_{k=1}^{K}, s_0\rangle$, 
where $\mathcal{S}$ and $\mathcal{A}$ are finite environment state (henceforth env-state) and action spaces, respectively,
and $s_0\in\mathcal{S}$ is the initial env-state. 
If the agent is in MDP $k$, then on taking action $a\in\mathcal{A}$ in env-state $s\in\mathcal{S}$ the agent receives reward $r=R_{sa}(k)$ and the environment transitions to env-state $s'$ with probability $P^{s'}_{sa}(k)$. Throughout we assume that the planning horizon is finite, which in turn implies that time is part of the env-state. Let $\mathcal{S}_t$ be the set of env-states at time step $t$, $S_t$ be a random variable indicating the env-state at time step $t$ whose specific realization is denoted $s_t$, $A_t$ be a random variable indicating the action taken at time step $t$, whose specific realization is denoted $a_t$, and let 
\begin{align} \label{eq:history}
h_t=\langle s_0,a_0,r_1,s_1,...,s_{t-1},a_{t-1},r_{t},s_{t} \rangle
\end{align}
be the history at time step $t$.
Because of the agent's lack of knowledge about the MDP it is facing, we consider history-dependent stochastic policies and use $\pi(a|h)$ to denote the probability of choosing action $a$ given history $h$ under policy $\pi$. 
During execution, history gives the agent knowledge about the true MDP it might be in or, equivalently, the MDPs that it cannot be in.
Formally, we can summarize the current history $h$ into a \emph{knowledge state}, $b:=\langle s, \kappa \rangle$, where $s$ is the current env-state, and $\kappa:=\{k: k\sim h\}$ is the set of indices of MDPs consistent with $h$.
Initially, the agent is in knowledge state $b_0=\langle s_0, \kappa_0 \rangle$ where $\kappa_0=\{1,2...K\}$.
Let $B_t$ be a random variable indicating the knowledge state at time step $t$, and $b_t$ be the knowledge state given history $h_t$. (In general there is a many to one mapping from histories to knowledge states.) We define the agent's planning objective below. 

\subsection{Commitment Semantics}
Note that there are two types of uncertainty in our setting. There is non-probabilistic uncertainty (i.e., incomplete knowledge) over which MDP the agent is facing, and there is probabilistic uncertainty (i.e., stochastic state transitions and possibly rewards) within an MDP. 

Our commitment in this uncertain environment $\mathcal{E}$ is formally defined as follows.
\begin{definition}
	A probabilistic commitment $c$ is formally defined as a tuple $\langle \Phi, T, p \rangle$, where $\Phi \subset \mathcal{S}$ is the commitment  env-state space, $T$ is the commitment finite time horizon, 
    and $p$ is the commitment probability. By making commitment $c$, the agent is constrained to follow a policy $\pi$, such that
\begin{align}\label{eq:semantics}
\Pr_{\pi} ( S_T \in \Phi | S_0=s_0; k ) \geq p, \forall k\in\kappa_0.
\end{align}
\end{definition}
From Equation \eqref{eq:semantics}, the semantics of a probabilistic commitment is clear:
the agent is constrained to follow a (in general history-dependent) policy, such that starting at the initial env-state, it will reach a env-state in the committed env-state space, $S_T$, at the time horizon, $T$, with at least the committed probability, $p$, no matter which MDP it is in.
Given probabilistic commitment $c$, let $\Pi_{c}$ be the set of all history-dependent stochastic policies that satisfy Equation \eqref{eq:semantics}.

\subsection{Minimax Regret}
In this paper, we are interested in finding a good policy given a probabilistic commitment, using maximum regret as the performance criterion.
Let
\begin{align*}
U^{\pi}(k)=\E_{\pi}\left[\sum_{t=0}^{T-1}R_{S_tA_t}(k)|S_0=s_0; k\right]
\end{align*}
be the expected cumulative reward under policy $\pi$ if the true MDP is $k$,
and let $U^*_{c}(k)$ be the expected cumulative reward under the optimal policy respecting the semantics of commitment $c$ if the true MDP is $k$:
\begin{align*}
U^*_{c}(k)=\max_{\pi\in\Pi_{c}} U^{\pi}(k).
\end{align*}
Finding $U^*_{c}(k)$ amounts to solving a standard constrained MDP problem and this can be done efficiently by linear programming~\cite{altman1999constrained}.
Given commitment $c$, let $\rho_{c}^{\pi}$ denote the maximum regret of policy $\pi$ under $c$, i.e.,
\begin{align*}
\rho_{c}^{\pi}=\max_{k\in\kappa_0}U^*_{c}(k)-U^{\pi}(k).
\end{align*}
Let $\Pi^*_c$ be the set of policies that minimizes the maximum regret while respecting the commitment semantics, 
\begin{align*}
\Pi^*_c = \{\pi : \pi\in\Pi_c, \rho_{c}^{\pi} = \min_{\pi'\in\Pi_{c}} \rho_{c}^{\pi'} \}.
\end{align*}
The agent's planning goal is to find a policy in $\Pi^*_c$.
We conclude this section with a series of formal observations showing that straightforward planning methods will not be enough to construct policies in $\Pi^*_c$.

Observation \ref{obs:MDPsBest} says that in general it is not sufficient to search over policies that are optimal for some MDP.
\begin{observation}
\label{obs:MDPsBest}
Let $\pi^*_c(k)=\argmax_{\pi\in\Pi_c} U^{\pi}(k)$ be a policy respecting commitment $c$ that is optimal if the true MDP is $k$.
Then, in general we have $\pi^*_c(k)\not\in\Pi^*_c, \forall k$.
\end{observation}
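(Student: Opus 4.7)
The statement is purely existential: it suffices to exhibit one environment in which, for every $k$, the MDP-optimal commitment-respecting policy $\pi^*_c(k)$ fails to achieve the minimum attainable maximum regret. My plan is to construct such an instance using a two-MDP version ($K=2$) of the Twin-States domain introduced earlier. The two MDPs share the state and action spaces and transitions but differ in the reward of $a_2$: in MDP~1, the $a_2$ rewards at states A and B are chosen so that $a_2$ strictly dominates $a_1$, while in MDP~2 the opposite holds. The commitment is to be in state A at horizon $T$ with probability one.

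First I would compute, by solving the constrained MDP via the linear program of Altman, both the values $U^*_c(k)$ and the optimal actions of $\pi^*_c(k)$ for each $k\in\{1,2\}$. Because the different $a_2$ rewards across MDPs carry information about the true MDP but $\pi^*_c(k)$ has no incentive to gather such information within its own MDP, each $\pi^*_c(k)$ behaves as an essentially stationary policy that commits in state A to one of $\{a_1,a_2\}$. I would then evaluate $U^{\pi^*_c(k)}(k')$ for $k\ne k'$, exhibiting a substantial gap $\rho_c^{\pi^*_c(k)} = U^*_c(k') - U^{\pi^*_c(k)}(k')$, since the policy that locks in $a_2$ in one MDP accumulates a low $a_2$-reward in the other, and symmetrically for the $a_1$-committing policy.

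Next I would construct an explicit history-dependent hedging policy $\pi\in\Pi_c$ that plays $a_2$ at state A at time $0$, inspects the observed reward (which by construction identifies the MDP), shrinks $\kappa$ to a singleton, and thereafter follows $\pi^*_c(k)$ for the identified $k$, with at most one final corrective step to preserve the commitment. I would then compute $U^\pi(k)$ for $k=1,2$ and show that, for reward values and horizon $T$ calibrated appropriately, $\rho_c^\pi < \min_{k} \rho_c^{\pi^*_c(k)}$, which directly implies $\pi^*_c(k)\notin\Pi^*_c$ for every $k$.

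The main obstacle I anticipate is the calibration step: selecting the $a_2$ reward magnitudes, the $a_1$ rewards, and $T$ so that the one-step opportunity cost of "exploring" with $a_2$ in the disfavored MDP is strictly smaller than the across-MDP regret incurred by the non-exploring $\pi^*_c(k)$, while simultaneously keeping both the MDP-optimal and the hedging policies feasible under the commitment in Equation~\eqref{eq:semantics}. Once the numerical choices are fixed so the inequality holds, the rest of the argument is a bookkeeping exercise over the few distinguishable histories, relying only on the definitions of $U^\pi$, $U^*_c$, $\Pi_c$, and $\rho_c^\pi$.
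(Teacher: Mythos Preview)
Your proposal is correct and follows essentially the same approach as the paper: both use the Twin-States domain as a constructive counterexample, showing that a history-dependent policy that first probes $a_2$ to identify the MDP and then exploits strictly dominates every $\pi^*_c(k)$ in maximum regret. The only difference is that the paper uses the full nine-MDP instance and defers verification to the computational results in Table~\ref{table:TwinStatesMaxRegret} (the ``MDPs-Best'' row), whereas you propose a pared-down $K=2$ instance with an explicit hand calculation; your version is more self-contained, but the underlying idea is identical.
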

Observation \ref{obs:Greedy} says that in general it is not sufficient to greedily pick the next action that minimizes the maximum myopic regret.
\begin{observation}
\label{obs:Greedy}
Let $\pi_{G}$ be the greedy policy under which the agent selects the next action that minimizes the maximum myopic regret over the possible MDPs consistent with the current knowledge, i.e. 
\begin{align*}
a_t=\argmin_{a} \max_{k\in \kappa_t} \{ \max_{a'}R_{s_ta'}(k)-R_{s_ta}(k)\}.
\end{align*}
Then, in general we have $\pi_{G}\not\in \Pi^*_c$.
\end{observation}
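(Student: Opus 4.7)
The claim is an ``in general'' statement, so the plan is to exhibit a single concrete instance in which $\pi_G\not\in\Pi^*_c$. Since the definition of $\pi_G$ makes no reference to the commitment $c$, the cleanest route is to construct an instance in which greedy even fails to lie in the larger set $\Pi_c$, by being myopically seduced into a region of the state space from which the committed env-state set is unreachable by horizon $T$.

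Concretely, I would reuse Twin-States with its reward and transition structure intact, but set the initial env-state to $B$ and take the commitment $c=\langle\{A\},T,1\rangle$ with $T\ge 1$. In state $B$ the maximum myopic regret is $4$ for $a_0$ (attained when $R_{Ba_2}(k)=4$), $3$ for $a_2$ (attained when $R_{Ba_2}(k)=0$), and only $1$ for $a_1$ (attained when $R_{Ba_2}(k)=4$). Moreover, because $R_{Ba_1}=3$ in every MDP, observing the reward of $a_1$ never refines $\kappa_t$, so the greedy computation is unchanged at every step along its own trajectory. The rule therefore selects $a_1$ at each step, and since $a_1$ self-loops at $B$ the agent stays in $B$ throughout, giving $\Pr_{\pi_G}(S_T\in\{A\}\mid S_0=B;k)=0<1=p$ for every $k\in\kappa_0$. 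This violates~\eqref{eq:semantics}, so $\pi_G\not\in\Pi_c\supseteq\Pi^*_c$.

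I do not anticipate a real obstacle in this proof: the observation essentially points out that greedy is wholly reward-driven while membership in $\Pi^*_c$ requires constraint-awareness, and any instance in which satisfying $c$ demands a myopically dominated action at some reachable knowledge state suffices. The only mildly delicate point to check is that the greedy trajectory indeed never escapes $B$, which follows from the invariance of $\kappa_t$ along that trajectory noted above. If one instead wanted a harder counterexample in which $\pi_G\in\Pi_c$ but $\pi_G\not\in\Pi^*_c$, a more delicate construction exploiting the trade-off between exploratory information gain and cross-MDP regret would be needed, but this is not required for the statement as given.
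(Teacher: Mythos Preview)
Your argument is correct: by relocating the initial env-state to $B$ in Twin-States and keeping $c=\langle\{A\},T,1\rangle$, the greedy rule selects $a_1$ at every step (myopic max regret $1$, versus $3$ for $a_2$ and $4$ for $a_0$), the observation $r=3$ never refines $\kappa_t$, and the agent is trapped in $B$; hence $\pi_G\notin\Pi_c\supseteq\Pi^*_c$. Nothing is missing.

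The paper takes a different route. It keeps Twin-States exactly as described (initial env-state $A$) and verifies numerically in the Empirical Results section that the greedy policy---which in that instance \emph{does} satisfy the commitment, since from $A$ greedy never selects $a_0$---nonetheless has strictly larger maximum regret than the CCL policies (e.g.\ max regret $5$ versus $1$ at $T=3$). So the paper establishes the stronger fact $\pi_G\in\Pi_c\setminus\Pi^*_c$ rather than merely $\pi_G\notin\Pi_c$. Your construction is cleaner and fully self-contained (no computation of $U^*_c(k)$ or competing policies is needed), but it proves the observation by exploiting greedy's blindness to the constraint rather than its myopia about future information---a point you already flag in your final sentence. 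Both are legitimate proofs of the statement as written.
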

Observation \ref{thm:Stochastic} says that it is possible that no policy in $\Pi^*_c$ is deterministic even if all MDPs in the environment are deterministic.
\begin{observation}\label{thm:Stochastic}
There exists an environment $\mathcal{E}$ where all MDPs are deterministic, i.e. $\forall k ,s, a ~\exists s' $ such that $P_{sa}^{s'}(k)=1$, and no policy in $\Pi^*_c$ is deterministic.
\end{observation}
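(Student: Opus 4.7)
The strategy is to exhibit an explicit environment $\mathcal{E}$ and commitment $c$ such that no deterministic policy attains the minimax regret. The driving intuition is that the worst-case objective over a set of MDPs creates a matching-pennies-style interaction: committing deterministically to one action invites Nature to pick the MDP in which that action is bad, whereas uniform randomization equalizes the two MDPs and strictly lowers the worst-case regret.

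\textbf{Construction.} Take $K=2$ deterministic MDPs sharing the state set $\{s_0, \phi\}$ (with time indices appended to obey the finite-horizon convention), action set $\{a_1, a_2\}$, and horizon $T=1$. Both actions deterministically transition $s_0$ to $\phi$ in both MDPs. Choose the symmetric ``anti-identity'' reward pattern
\begin{equation*}
R_{s_0 a_1}(1) = R_{s_0 a_2}(2) = 2, \qquad R_{s_0 a_1}(2) = R_{s_0 a_2}(1) = 0.
\end{equation*}
Set the commitment $c = \langle \{\phi\}, 1, 1 \rangle$, which is trivially satisfied by every policy since both actions lead to $\phi$.

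\textbf{Calculation.} Both actions are feasible under $c$, so $U^*_c(k) = 2$ for $k=1,2$. For any policy $\pi$, let $q = \pi(a_1 \mid s_0)$; then $U^\pi(1) = 2q$ and $U^\pi(2) = 2(1-q)$, giving
\begin{equation*}
\rho^\pi_c = \max\{\, 2 - 2q, \; 2 - 2(1-q) \,\} = 2\max\{1-q,\, q\} \geq 1,
\end{equation*}
with equality exactly when $q = 1/2$. Hence $\Pi^*_c$ is the singleton containing the stochastic policy $\pi(a_1 \mid s_0) = 1/2$, while every deterministic policy ($q \in \{0,1\}$) attains maximum regret $2$.

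\textbf{Main obstacle.} There is no genuine technical obstacle; the only care needed is to verify that the example satisfies the commitment constraint (immediate, since every action reaches $\Phi$) and to confirm that the minimax is attained uniquely in the interior of the probability simplex, which follows from the elementary one-dimensional optimization above. The conceptual point worth emphasizing in the write-up is simply that the minimax over MDPs can induce strict convexity in the randomization parameter even when each individual utility is linear in $q$ and each MDP is deterministic, so deterministic policies, being vertices of the simplex, cannot lie in $\Pi^*_c$.
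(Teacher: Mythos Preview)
Your construction is correct and cleanly establishes the observation: the matching-pennies reward pattern forces the minimax-regret optimum to the interior of the simplex, and your one-line computation verifies that $q=1/2$ is the unique minimizer while both deterministic choices incur regret $2$.

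The paper takes a different route. Rather than giving a minimal analytic example, it points to the Twin-States domain (two states, three actions, nine reward configurations for action $a_2$) with horizon $T=2$ and verifies the claim \emph{computationally}: it solves the exact CCL program (which admits stochastic policies) and the MILP relaxation (which is restricted to deterministic ones), observes that the former achieves strictly smaller maximum regret, and concludes that no deterministic policy is optimal. Your argument is more elementary and fully self-contained---no optimization solver or numerical comparison is needed, and the proof would survive independent of any implementation. The paper's approach, by contrast, reuses a single running example that simultaneously witnesses several of the observations and feeds into the later experiments, so it buys economy of exposition at the cost of analytic transparency for this particular claim.
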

The Twin-States domain provides a proof of the above observations by example as we verify in the Section on Empirical Results below.

Finally, we might think whenever the agent learns more about the true MDP during execution it is a good idea to re-plan from the current env-state with the original commitment probability.
Clearly, if during execution one can always find a policy that achieves the original commitment probability conditioned on the current env-state, such a re-planning approach will certainly respect the commitment semantics.
Observation \ref{obs:rhoUpdate} says that this is not always possible, and the example shown in Figure \ref{fig:ObsExample} verifies it.
\begin{figure}
\centering
\includegraphics[scale=0.7]{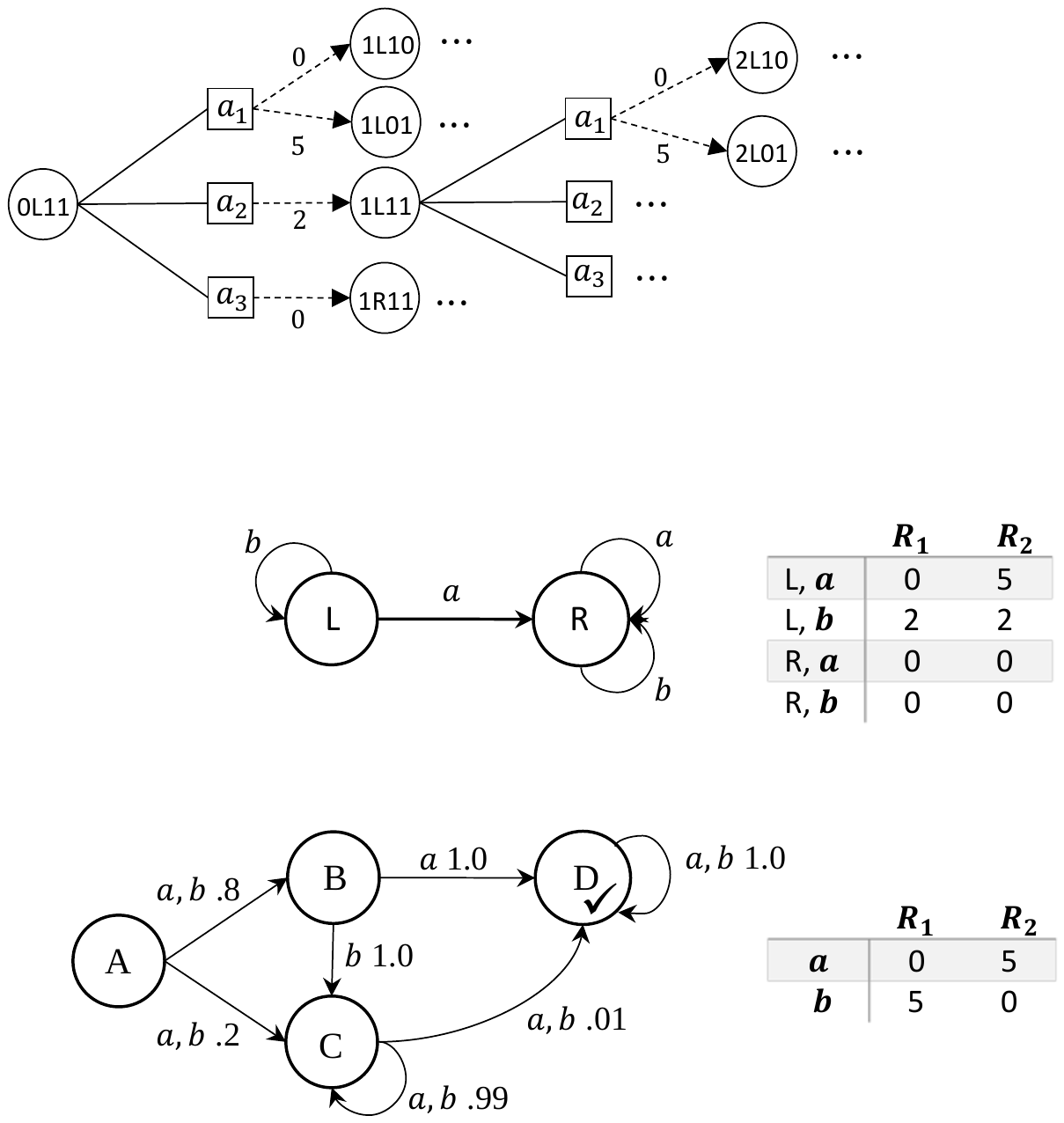}
\caption{Starting in state A, the agent commits to reaching the absorbing state D at time step two with at least probability $.8$. If the agent happens to be in state C at time step one, there is no plan that reaches state D from state C with probability at least $.8$ (verifying Observation \ref{obs:rhoUpdate}). There are two possible reward functions $R_1$ and $R_2$ shown above. Even though re-planning from state C does not yield a plan that leads to state D with probability $0.8$, the new plan will nonetheless reduce regret because at time step $1$ we will know which reward function applies and can therefore choose the more rewarding action in state C.}
\label{fig:ObsExample}
\end{figure}
\begin{observation}\label{obs:rhoUpdate}
There exists $\pi\in\Pi_c$ such that if the agent executes policy $\pi$ for the first $t > 0$ time steps starting in state $s_0$, the history generated $h_t$ is such that 
\begin{align*}
\forall \pi', \exists k\in \kappa_t~
\Pr_{\pi'} ( S_T \in \Phi | S_t=s_t; k ) < p.
\end{align*}
\end{observation}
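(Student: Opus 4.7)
The plan is to verify Observation \ref{obs:rhoUpdate} by explicit construction, since the claim is purely existential; the environment sketched in Figure \ref{fig:ObsExample} already suggests the right gadget, and I would formalize it as follows.

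First, I would take a simple environment with a single MDP (so $\kappa_0 = \{1\}$ and quantification over $k$ is trivial), three transient env-states $A, B, C$ and an absorbing goal state $D$, horizon $T = 2$, commitment set $\Phi = \{D\}$, and commitment probability $p = 0.8$. From $s_0 = A$ at time $0$, the only relevant action transitions stochastically to $B$ with probability $0.9$ and to $C$ with probability $0.1$. From $B$ at time $1$ some action reaches $D$ with probability $1$; from $C$ at time $1$ the best available action reaches $D$ with probability only $0.5$.

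Second, I would exhibit the witness policy $\pi$: at $A$ take the stochastic action, at $B$ take the action reaching $D$, at $C$ take its best action toward $D$. Then
\begin{align*}
\Pr\nolimits_\pi(S_2 \in \Phi \mid S_0 = A) \;=\; 0.9 \cdot 1 + 0.1 \cdot 0.5 \;=\; 0.95 \;\geq\; p,
\end{align*}
so $\pi \in \Pi_c$. Third, with probability $0.1$ the execution of $\pi$ produces a history $h_1$ with $s_1 = C$, and from $C$ no policy $\pi'$ achieves $\Pr_{\pi'}(S_2 \in \Phi \mid S_1 = C) \geq 0.8$, since even the best action attains only $0.5$. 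Taking $k$ to be the unique MDP in $\kappa_1$ gives the required inequality for all $\pi'$.

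There is essentially no obstacle beyond choosing transition probabilities so that the marginal reach probability from $s_0$ meets $p$ while the conditional reach probability from some reachable $s_t$ lies strictly below $p$; this is possible precisely because the initial commitment averages over a favorable branch ($B$) and an unfavorable branch ($C$). If one prefers a multi-MDP instantiation, duplicating the MDP or letting transition dynamics agree across $k$ makes the witness $k$ in the observation any element of $\kappa_t$, and the argument is unchanged. The reward functions shown in Figure \ref{fig:ObsExample} play no role in this probabilistic argument; they are only relevant to the accompanying remark that re-planning from $C$ can still usefully reduce regret even though the original commitment probability is unattainable there.
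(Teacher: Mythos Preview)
Your proposal is correct and follows essentially the same approach as the paper: the paper's verification of Observation~\ref{obs:rhoUpdate} is precisely the example in Figure~\ref{fig:ObsExample}, with the same four-state structure ($A$ branching to $B$ or $C$, $D$ easily reachable from $B$ but not from $C$), and you have formalized that gadget with explicit transition probabilities. The only cosmetic difference is that the paper's instance has two reward functions (hence $K=2$), but as you note this is irrelevant to the probabilistic argument since the transition dynamics coincide, so any $k\in\kappa_t$ serves as the witness.
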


\begin{figure} 
\begin{framed}
\begin{subequations}
\begin{align} 
&\min_{x}  ~\max_{k} { U^*_{c}(k)-U(k) } \label{program:CCNLObj}\\
&\mathrm{subject~to~~} \notag \\ 
& \forall k \quad U(k)=\sum_{s,a} x_{sa}(k) R_{sa}(k)  \label{program:CCNLUk} \\ 
&\forall k,s,a \quad x_{sa}(k)\geq 0 \label{program:CCNLxgeqzero} \\
&\forall k,s' ~ \sum_{a'}x_{s'a'}(k)=\sum_{s, a}x_{sa}(k)P_{sa}^{s'}(k)+\delta_{s's_0} \label{program:CCNLxdynamics}\\
&\forall k, k', s, a\quad\frac{x_{sa}(k)}{\sum_{a'}x_{sa'}(k)}=\frac{x_{sa}(k')}{\sum_{a'}x_{sa'}(k')} \label{program:CCNLxconsistency} \\
&\forall k\quad \sum_{s\in\Phi} \sum_{a} x_{sa}(k)\geq p \label{program:CCNLcommitment}
\end{align}
\end{subequations}
\end{framed}
\caption{CCNL program. It uses occupancy measures $x$ as decision variables. 
Constraint \eqref{program:CCNLUk} guarantees that $U(k)$ is the cumulative reward in MDP $k$, through which the maximum regret is expressed in objective function \eqref{program:CCNLObj}.
Constraints \eqref{program:CCNLxgeqzero} and \eqref{program:CCNLxdynamics} guarantee that $x(k)$ is a valid occupancy measure given that the initial state is $s_0$ and the transition function of the $k^{th}$ MDP is $P(k)$, where $\delta_{s's_0}$ is the Kronecker delta that returns 1 when $s'=s_0$ and 0 otherwise.
Constraint \eqref{program:CCNLxconsistency} guarantees that all $K$ occupancy measures have the same underlying Markov policy.
The commitment semantics is explicitly expressed in constraint \eqref{program:CCNLcommitment}.
The corresponding Markov policy can be recovered via Equation \eqref{policy:CCNL} in the main text.}
\label{fig:CCNLprogram}
\end{figure}

\section{Methods}
In this section we introduce several methods for constructing policies that respect the commitment semantics for a given commitment $c$.
\subsection{Commitment Constrained No-Lookahead}
Let $\Pi_{0}$ be the set of all Markov policies, i.e., policies that choose actions solely as a function of the current env-state (and ignore $\kappa$).
Assuming $\Pi_0 \cap \Pi_c \neq \emptyset$, 
our Commitment Constrained No-Lookahead (CCNL) method of Figure~\ref{fig:CCNLprogram} finds a minimax regret Markov policy respecting the commitment semantics, which is a solution to the following problem:
\begin{align}\label{problem:CCNL}
\min_{\pi\in \Pi_c \cap\Pi_{0} } \rho_{c}^{\pi}.
\end{align}
For MDP $k$, each policy $\pi$ has a corresponding occupancy measure $x^{\pi}(k)$ for env-state-action pairs:
\begin{align*}
x_{sa}^{\pi}(k):=\E_{\pi}\left[\sum_{t=0}^{T-1}1_{\{S_t=s,A_t=a\}}|S_0=s_0; k \right].
\end{align*}
We will use shorthand notation $x(k)$ in place of $x^{\pi}(k)$ when policy $\pi$ is clear from the context.
If $\pi$ is a Markov policy, it can be recovered from its occupancy measure via
\begin{align}\label{policy:CCNL}
\pi(a|s)=\frac{x_{sa}(k)}{\sum_{a'}x_{sa'}(k)}.
\end{align}
Figure \ref{fig:CCNLprogram} presents our straightforward adaptation of the linear program for finding constrained-optimal policies in MDPs (see the caption of Figure~\ref{fig:CCNLprogram} for details).

\subsection{Commitment Constrained Lookahead}
During execution, the agent can observe the env-state transitions and reward and reason about the true MDP it might be in or, equivalently, the MDPs that it cannot be in. Thus, restricting the agent to Markov policies as in the previous section will lead to larger regret than is necessary. Here we consider the general case where the agent may choose actions based on the knowledge state (or equivalently history) for the first $0 < L \leq T$ steps, and use the env-state for the remaining time steps (if $L = 0$, we recover the Markov policy case above). We refer to $L$ as the knowledge-state-update boundary. The resulting {\bf \emph{$L$-updates}} policy has the form: 
\begin{align*} 
\pi (a|h_t) =
\begin{cases} 
\pi (a|b_t) & t < L\\ 
\pi (a|s_t, b_L) & t \geq L,
\end{cases}
\end{align*}
where $b_t$ is the knowledge state consistent with $h_t$ and $b_L$ is the knowledge state consistent with $h_L$ when $t\geq L$. 
It is important to note that after the knowledge-state-update boundary, the policy conditions on both the env-state as well as the last updated knowledge state $b_L$. 

For example, Figure \ref{fig:onestep} shows a $(L=)1$-updates policy constructed in the Twin-States domain.
After taking some action in the initial knowledge state, depending on which knowledge state it actually ends up in at time $L=1$, it then executes a Markov policy, represented by a curve, all the way up to the horizon.
Those Markov policies starting from time step $L=1$ are not necessarily the same, which gives the agent flexibility of choosing different behaviors  based on the last-updated knowledge about the environment.
\begin{figure}
\centering
\includegraphics[scale=0.8]{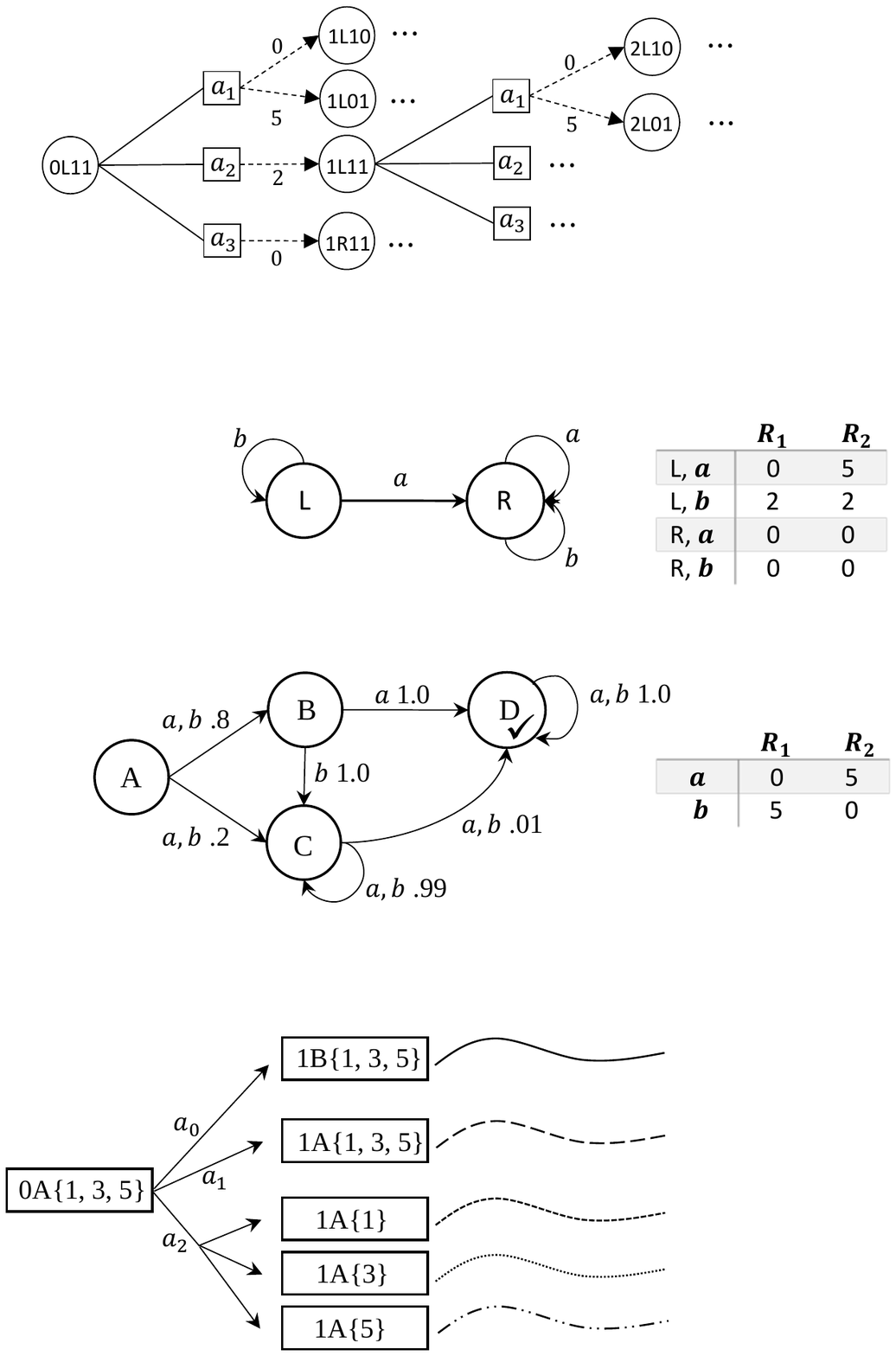}
\caption{Illustration of $1$-updates policy in the Twin-States domain.
Each box represents a reachable knowledge state with one step, where the number before letter ``A'' or ``B'' represents time and the number after the letter represents possible reward of action $a_2$ in state A.
Curves represent Markov policies the agent will follow after time step one.}
\label{fig:onestep} 
\end{figure}

Let $\Pi_{L}$ be the set of all $L$-updates policies.
Our Commitment Constrained Lookahead (CCL) method finds a minimax regret \la respecting the commitment semantics, which is a solution to the following problem:
\begin{align}\label{problem:CCL}
\min_{\pi\in \Pi_c \cap\Pi_{L} } \rho_{c}^{\pi}.
\end{align}
Problem \eqref{problem:CCL} can be expressed by the program in Figure \ref{fig:CCLprogram}.

\begin{figure*}[t!]
\begin{framed}
\begin{subequations}
\begin{align}
&\min_{x,y} ~\max_{k\in\kappa_0} \quad {U^*_{c}(k)   -U(k) } &&\text{(minimax regret objective)}\label{program:CCLObj}\\
&\mathrm{subject~to~}   \notag \\ 
&\forall k\in\kappa_0\notag \\
&~~U(k)= \sum_{b\in\mathcal{B}_{[0,L)}^{b_0}, a} y_{ba}(k) \tilde{R}_{ba}(k) +\sum_{b_L\in\mathcal{B}_L^{b_0}, s, a} x_{sa}^{b_L}(k)R_{sa}(k);   &&\text{(utility if MDP $k$ is true)}\label{program:CCLUk}\\
&\forall k, b, a\quad y_{ba}(k)\geq 0; \label{program:CCLygeqzero}\\
&\forall k , b'=\langle s',\kappa' \rangle \in \mathcal{B}_{[0,L]}^{b_0}\notag \\ 
&~~\sum_{a'}y_{b'a'}(k)=\sum_{b, a}y_{ba}(k)\tilde{P}_{ba}^{b'}(k)+\delta_{b'b_0}1_{\{k\in\kappa'\}}; &&\text{($\sum_{a}y_{b_0a}(k)=1$) }\label{program:CCLydynamics}\\
&\forall k, k', b\in\mathcal{B}_{[0,L)}^{b_0}, a \quad \frac{y_{ba}(k)}{\sum_{a'}y_{ba'}(k)}=\frac{y_{ba}(k')}{\sum_{a'}y_{ba'}(k')}; &&\text{(policies via $y(k)$ and $y(k')$ are consistent)}\label{program:CCLyconsistency}\\
&\forall k , b_L\in\mathcal{B}_L^{b_0}\quad y_{b_L}(k)=\sum_{a}y_{b_La}(k);&&\text{(define $y_{b_L}$ as the prob of reaching $b_L$)}\label{program:CCLybLk}\\
&\forall k , b_L\in\mathcal{B}_L^{b_0}, s, a\quad x_{sa}^{b_L}(k)\geq 0;  \label{program:CCLxgeqzero} \\
&\forall k , b_L=\langle s_L,\kappa_L \rangle\in\mathcal{B}_L^{b_0}, s' \notag \\
&~~\sum_{a'}x_{s'a'}^{b_L}(k)=\sum_{s, a}x_{sa}^{b_L}(k)P_{sa}^{s'}(k)+y_{b_L}(k)1_{\{k\in\kappa_L\}}\delta_{s's_L}; &&\text{($\sum_{a}x^{b_L}_{s_La}(k)=y_{b_L}(k)$) }\label{program:CCLxdynamics}\\
&\forall b_L\in\mathcal{B}_L^{b_0}, k, k', s, a \quad \frac{x_{sa}^{b_L}(k)}{\sum_{a'}x_{sa'}^{b_L}(k)}=\frac{x_{sa}^{b_L}(k')}{\sum_{a'}x_{sa'}^{b_L}(k')}; &&\text{(policies via $x(k)$ and $x(k')$ are consistent)}\label{program:CCLxconsistency}\\
&\forall k\in\kappa_0 \quad \sum_{b_L\in\mathcal{B}_L^{b_0}} \sum_{s\in\Phi,a}x_{sa}^{b_L}(k) \geq p &&\text{(commitment semantics)}\label{program:CCLcommitment}
\end{align}
\end{subequations}
\end{framed}
\caption{CCL program. To derive this program, we first define the knowledge state-based transition function $\tilde{P}_{ba}^{b'}(k)=1_{\{ k\in\kappa \}}\Pr(b'|b,a;k)$, and 
$\tilde{R}_{ba}(k)=1_{\{k\in\kappa\}}R_{sa}(k)$,
where $\tilde{P}_{ba}^{b'}(k)$ is the probability that the next knowledge state is $b'$ when taking action $a$ in knowledge state $b$, given that the true MDP is $k$.
Similarly $\tilde{R}_{ba}(k)$ is the reward of doing action $a$ in knowledge state $b$ given that the true MDP is $k$.
Note that if MDP $k$ is ruled out according to knowledge state $b$, then we define $\forall b', a, \tilde{P}_{ba}^{b'}(k)=0$, and $\forall a, \tilde{R}_{ba}(k)=0$.
Given policy $\pi$, one can use $\tilde{P}(k)$ to calculate the corresponding occupancy measure $y^{\pi}(k)$ for knowledge state-action pairs as follows:
$y^{\pi}_{ba}(k):=\E_{\pi}\left[\sum_{t=0}^{T-1}1_{\{B_t=b,A_t=a\}}|B_0=b_0; k \right].$
We use $\mathcal{B}_l^b$ to denote the set of reachable knowledge states after executing exactly $l$ actions from knowledge state $b$ and $\mathcal{B}_{[l_1,l_2]}^b=\bigcup_{l=l_1}^{l_2}\mathcal{B}_l^b$ to denote the set of reachable knowledge states from $b$ by executing any $l$ actions such that $l\in [l_1,l_2]$.
Because time is a state feature, $\mathcal{B}_l^b$ and $\mathcal{B}_{l'}^b$ are disjoint if $l\neq l'$.
CCL generates beforehand all reachable knowledge states from initial knowledge state $b_0$ within $L$ actions, $\mathcal{B}_{[0,L)}^{b_0}$. The state-action measures also enable us to express the expected cumulative reward conveniently in constraint \eqref{program:CCLUk} where the first RHS term sums up the reward of the first $L$ time steps and the second term the remaining $T-L$ time steps.
The state-action measures also enable us to express commitment semantics conveniently via constraint \eqref{program:CCLcommitment}.
Constraints \eqref{program:CCLygeqzero}, \eqref{program:CCLydynamics}, and \eqref{program:CCLyconsistency} on $y$ are the counterparts of \eqref{program:CCNLxgeqzero}, \eqref{program:CCNLxdynamics}, and \eqref{program:CCNLxconsistency} in Figure \ref{fig:CCNLprogram}.
Similarly, constraints \eqref{program:CCLxgeqzero}, \eqref{program:CCLxdynamics}, and \eqref{program:CCLxconsistency} are the counterparts for $x$.}
\label{fig:CCLprogram}
\end{figure*}

The program in Figure \ref{fig:CCLprogram} introduces as decision variables $y(k)$ and $x(k)$ for every possible MDP $k$, where $y(k)$ is the knowledge state-action occupancy measure if the true MDP is $k$, but only for those knowledge states reachable within the first $L$ time steps, and $x(k)$ is the env-state-action occupancy measure for the env-states in the remaining $T-L$ time steps if the true MDP is $k$. See the caption of Figure~\ref{fig:CCLprogram} for details.

Any \la $\pi_{L}$ respecting the commitment semantics can be derived from a feasible solution to the program in Figure \ref{fig:CCLprogram} via
\begin{align} 
\label{policy:CCL}
\pi_{L} (a|h_t) =
\begin{cases}
\pi_{L} (a|b_t) = \dfrac{y_{b _ta}(k)}{\sum_{a'}y_{b_ta'}(k)}  & t < L\\
\pi_{L} (a|s_t, b_L) =  \dfrac{x_{s_ta}^{b_L}(k) }{ \sum_{a'}x_{s_ta'}^{b_L}(k)}   & t \geq L.
\end{cases}
\end{align}
Theorem \ref{thm:CCL} states that CCL with knowledge-state-update boundary $L$ finds a minimax regret policy in $\Pi_c \cap \Pi_L$.
\begin{theorem}\label{thm:CCL}
If $\Pi_c \cap \Pi_L \neq \emptyset$ holds for commitment $c$, the program in Figure \ref{fig:CCLprogram} is feasible.
Let $x^*, y^*$ be its optimal solution, then the policy derived via Equation \eqref{policy:CCL} with $x^*, y^*$ is a minimax regret policy in $\Pi_c \cap \Pi_L$.
\end{theorem}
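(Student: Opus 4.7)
The plan is to prove Theorem~\ref{thm:CCL} by establishing an objective-preserving correspondence between feasible solutions $(y,x)$ of the CCL program and $L$-updates policies in $\Pi_c\cap\Pi_L$, so that problem \eqref{problem:CCL} is solved by reading off the policy from an optimal LP solution via \eqref{policy:CCL}. This mirrors the classical occupancy-measure reduction for constrained MDPs that was already invoked for the Markov case (CCNL); the novelty is that the flow is maintained over knowledge states for $t<L$ and then transferred to env-states for $t\geq L$.

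First I would prove the forward direction (policy $\Rightarrow$ LP feasibility), which already yields the first claim of the theorem. Given any $\pi\in\Pi_c\cap\Pi_L$ and each $k\in\kappa_0$, take $y^{\pi}_{ba}(k)$ as defined in the caption of Figure~\ref{fig:CCLprogram}, and set
\begin{align*}
x^{\pi,b_L}_{sa}(k)=\E_{\pi}\!\left[\sum_{t=L}^{T-1}\mathbf{1}_{\{S_t=s,\,A_t=a,\,B_L=b_L\}}\bigm|B_0=b_0;k\right].
\end{align*}
I would then verify each constraint in turn. Non-negativity \eqref{program:CCLygeqzero} and \eqref{program:CCLxgeqzero} is immediate from the definitions. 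The flow conservation constraints \eqref{program:CCLydynamics} and \eqref{program:CCLxdynamics} are the standard knowledge-state and env-state Bellman flow equations, where the use of $\tilde{P}$ automatically zeros out mass through knowledge states inconsistent with $k$ (so $y^{\pi}_{ba}(k)=0$ whenever $k\notin\kappa$). Consistency constraints \eqref{program:CCLyconsistency} and \eqref{program:CCLxconsistency} hold because, by definition of an $L$-updates policy, $\pi(\cdot|b)$ for $t<L$ and $\pi(\cdot|s,b_L)$ for $t\geq L$ do not depend on $k$. The utility equation \eqref{program:CCLUk} is the natural decomposition of the expected cumulative reward into the first $L$ steps (scored by $y$) and the remaining $T-L$ steps (scored by $x^{b_L}$), and the commitment constraint \eqref{program:CCLcommitment} restates $\Pr_{\pi}(S_T\in\Phi\mid s_0;k)\geq p$ through the env-state occupancy exactly as in CCNL.

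Next I would prove the converse direction. Given a feasible $(y,x)$, define $\pi$ via \eqref{policy:CCL}. The consistency constraints \eqref{program:CCLyconsistency} and \eqref{program:CCLxconsistency} guarantee that the right-hand sides of \eqref{policy:CCL} are independent of the choice of $k$ used to evaluate them (any $k$ with a positive denominator yields the same quotient); where all denominators vanish, I would define $\pi$ arbitrarily, since such histories occur with probability zero under $\pi$ in every $k\in\kappa_0$. Standard occupancy-measure arguments (Altman) then show that the $\pi$-induced occupancy in MDP $k$ coincides with $y(k)$ for $t<L$ and, conditional on $B_L=b_L$, with $x^{b_L}(k)$ for $t\geq L$. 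Consequently $U^{\pi}(k)=U(k)$ by \eqref{program:CCLUk} and $\Pr_{\pi}(S_T\in\Phi\mid s_0;k)\geq p$ by \eqref{program:CCLcommitment}, so $\pi\in\Pi_c\cap\Pi_L$ and its maximum regret over $k\in\kappa_0$ equals the LP objective \eqref{program:CCLObj}. Combining the two directions, the LP optimum equals $\min_{\pi\in\Pi_c\cap\Pi_L}\rho_c^{\pi}$, and the policy recovered from $(x^*,y^*)$ attains this minimum, proving the second claim.

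The main obstacle I anticipate is bookkeeping at the $t=L$ handoff: one must verify that the probability with which each $b_L\in\mathcal{B}_L^{b_0}$ is reached under $\pi$, namely $y_{b_L}(k)$ as defined by \eqref{program:CCLybLk}, correctly seeds the env-state flow in \eqref{program:CCLxdynamics} so that the per-$b_L$ measures $x^{b_L}(k)$ splice together with $y(k)$ to reproduce $\pi$'s end-to-end occupancy across all $T$ steps, and that this splicing identity lifts cleanly into the decompositions in \eqref{program:CCLUk} and \eqref{program:CCLcommitment}. A secondary subtlety is the zero-denominator case in \eqref{program:CCLyconsistency}, \eqref{program:CCLxconsistency}, and \eqref{policy:CCL}, which must be handled by the measure-zero argument noted above so that the recovery map \eqref{policy:CCL} is well-defined as an actual stochastic policy rather than merely almost surely defined.
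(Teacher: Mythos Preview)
Your proposal is correct and follows essentially the same route as the paper: both directions are argued by identifying a policy's knowledge-state and env-state occupancy measures with the decision variables $(y,x)$, checking the flow, consistency, and commitment constraints one by one, and reading the policy back via \eqref{policy:CCL}. Your explicit handling of the $t=L$ handoff and the zero-denominator case is slightly more careful than the paper's own write-up, but the overall strategy is identical.
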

The proofs for Theorem~\ref{thm:CCL} and the theorems that follow are presented in the Appendix of a full paper available on arXiv.

Intuitively, a knowledge-state-update boundary greater than zero may help the agent choose actions according to its changing knowledge about the actual MDP it is in and therefore improve the performance.
Theorem \ref{thm:Lzero} says the maximum regret of the policy derived by CCL using any $L>0$ is upper bounded by the maximum regret of the policy derived by CCNL.
\begin{theorem}\label{thm:Lzero}
If $\Pi_c \cap \Pi_0 \neq \emptyset$ holds for commitment $c$, the program in Figure \ref{fig:CCLprogram} is feasible for any $L\in[0, T]$.
Let $\pi_L^*$ be the policy derived by CCL using knowledge-state-update boundary $L$, then for any $L\in[0, T]$ we have
\begin{align*}
\rho_c^{\pi_L^*} \leq \rho_c^{\pi_0^*}.
\end{align*}
\end{theorem}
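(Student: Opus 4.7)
The plan is to show the set inclusion $\Pi_0 \subseteq \Pi_L$ for every $L \in [0,T]$, from which both the feasibility claim and the regret inequality follow by pure monotonicity arguments, invoking Theorem~\ref{thm:CCL} as the only substantive input.

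First, I would argue that every Markov policy can be viewed as an $L$-updates policy for any $L$. A policy $\pi \in \Pi_0$ has the form $\pi(a \mid h_t) = \pi(a \mid s_t)$. Plugging this into the template of an $L$-updates policy, we can simply set $\pi(a \mid b_t) := \pi(a \mid s_t)$ for $t < L$ (ignoring the $\kappa$ component of $b_t$) and $\pi(a \mid s_t, b_L) := \pi(a \mid s_t)$ for $t \ge L$ (ignoring $b_L$). This exhibits $\pi$ as an element of $\Pi_L$, hence $\Pi_0 \subseteq \Pi_L$. Intersecting with $\Pi_c$ preserves the inclusion, so $\Pi_c \cap \Pi_0 \subseteq \Pi_c \cap \Pi_L$.

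Second, I would handle feasibility. By hypothesis $\Pi_c \cap \Pi_0 \neq \emptyset$, and the just-established inclusion gives $\Pi_c \cap \Pi_L \neq \emptyset$ for every $L \in [0,T]$. By Theorem~\ref{thm:CCL}, this implies that the CCL program of Figure~\ref{fig:CCLprogram} is feasible for every such $L$. Moreover, Theorem~\ref{thm:CCL} also tells us that $\pi_L^*$ attains $\min_{\pi \in \Pi_c \cap \Pi_L} \rho_c^{\pi}$, and similarly $\pi_0^*$ attains $\min_{\pi \in \Pi_c \cap \Pi_0} \rho_c^{\pi}$. Since minimization over a larger feasible set can only decrease (or not increase) the optimal value, the inclusion $\Pi_c \cap \Pi_0 \subseteq \Pi_c \cap \Pi_L$ yields
\begin{align*}
\rho_c^{\pi_L^*} \;=\; \min_{\pi \in \Pi_c \cap \Pi_L} \rho_c^{\pi} \;\le\; \min_{\pi \in \Pi_c \cap \Pi_0} \rho_c^{\pi} \;=\; \rho_c^{\pi_0^*},
\end{align*}
which is the claimed inequality.

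The only step requiring even mild care is the first one: one must be sure that the $L$-updates parametrization is genuinely expressive enough to house arbitrary Markov policies, which hinges on $b_t$ containing $s_t$ as a component so that conditioning on $b_t$ can ``simulate'' conditioning only on $s_t$. Once that observation is in hand, the argument is purely set-theoretic and the theorem follows without any further LP manipulation. Thus the main obstacle is essentially notational bookkeeping rather than mathematical depth.
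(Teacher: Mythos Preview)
Your proposal is correct and takes essentially the same approach as the paper: the paper's proof likewise reduces the claim to the inclusion $\Pi_0 \subset \Pi_L$, established by exhibiting any Markov policy $\pi_0$ as an $L$-updates policy via $\pi_L(a\mid b_t)=\pi_0(a\mid s_t)$ and $\pi_L(a\mid s_t,b_L)=\pi_0(a\mid s_t)$, and then invokes the fact that CCL attains the minimax regret over $\Pi_c\cap\Pi_L$. Your version is slightly more explicit about the monotonicity step and the appeal to Theorem~\ref{thm:CCL}, but the argument is the same.
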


However, one has to be careful in using deeper boundaries because the performance of CCL is guaranteed to be monotonically non-decreasing in $L$ only when transition dynamics is invariant across MDPs, but this monotonicity cannot be guaranteed in general, as stated in Theorem \ref{thm:Lgeneral} and Theorem  \ref{thm:rewardonly}.
\begin{theorem}\label{thm:Lgeneral}
There exists an environment $\mathcal{E}$, a commitment $c$, $L'>L>0$ satisfying $\Pi_c \cap \Pi_L \neq \emptyset$ and $\Pi_c \cap \Pi_{L'} \neq \emptyset$, such that
\begin{align*}
\rho_c^{\pi_{L'}^*} > \rho_c^{\pi_{L}^*},
\end{align*}
where $\pi_L^*$ and $\pi_{L'}^*$ are the policies derived by CCL using boundaries $L$ and $L'$, respectively. 
\end{theorem}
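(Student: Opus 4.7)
The plan is to exhibit an explicit counterexample. The reason monotonicity in $L$ can fail when transition dynamics differ across MDPs is that the information sets used by $L$-updates and $L'$-updates policies are in general incomparable: for $t \geq L$ an $L$-updates policy indexes its post-$L$ Markov policy by $b_L = \langle s_L, \kappa_L \rangle$, which retains the env-state at time $L$, while an $L'$-updates policy indexes the post-$L'$ Markov policy by $b_{L'} = \langle s_{L'}, \kappa_{L'} \rangle$ and for $L \leq t < L'$ uses only $b_t$, neither of which carries $s_L$ once distinct $s_L$'s have merged into a common later env-state. Therefore $\Pi_L \not\subseteq \Pi_{L'}$ in general, and increasing $L$ can strictly hurt.

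My construction uses $T = 3$, $L = 1$, $L' = 2$, with two MDPs. From $s_0$ a single action goes to env-state $A$ with probability $0.3$ in MDP $1$ and $0.7$ in MDP $2$, and to env-state $B$ with the complementary probability; both $A$ and $B$ transition deterministically (identically in the two MDPs) into a common env-state $C$ at time $2$; from $C$ one action reaches the committed set $\Phi$ deterministically in MDP $1$ and fails to reach $\Phi$ in MDP $2$, while the other action does the opposite. Rewards are concentrated at the $C$-actions so that the reward-optimal action in each MDP coincides with the commitment-satisfying one. With commitment probability $p = 1/2$ I would solve the CCL program in Figure~\ref{fig:CCLprogram} at both boundaries. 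The $L = 1$ program admits two post-$L$ Markov policies indexed by whether $s_1 = A$ or $s_1 = B$, yielding two mixing probabilities $q_A, q_B$ at $C$; the $L' = 2$ program has a single post-$L'$ Markov policy at $b_2 = \langle C, \{1,2\} \rangle$ and must therefore pick a single mixing probability $q$ at $C$ that satisfies the commitment in both MDPs, which pins it at $q = 1/2$. A direct computation then yields strictly smaller maximum regret at $L = 1$ (attained at an asymmetric $(q_A, q_B)$ that exploits the asymmetric transition probabilities from $s_0$) than at $L' = 2$.

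The main obstacle is calibrating the instance so that (i) both $\Pi_c \cap \Pi_L$ and $\Pi_c \cap \Pi_{L'}$ are nonempty, (ii) the minimax-regret solution at the $L = 1$ level is an asymmetric pair $(q_A, q_B)$ that no $L' = 2$ policy can represent, and (iii) the $q$ forced at the $L' = 2$ level strictly increases the worst-case regret over $k \in \kappa_0$. The asymmetry in the transitions $s_0 \to A$ and $s_0 \to B$ across MDPs is what breaks the symmetry between $q_A$ and $q_B$ and opens the gap; when transitions are identical across MDPs, the relevance of $s_L$ washes out and monotonicity is restored, consistent with Theorem~\ref{thm:rewardonly}.
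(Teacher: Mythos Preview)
Your proposal is correct and takes essentially the same approach as the paper: a three-step environment where the first transition has MDP-dependent probabilities into two intermediate states that then merge deterministically into a common state, so that the $L=1$ policy can index its final action on the intermediate state while the $L'=2$ policy cannot. The paper's instance is slightly leaner---it uses commitment probability $p=0$ (so the constraint is vacuous and does no work), splits $0.9/0.1$ versus $0.1/0.9$, and lets only the reward at the merged state differ across MDPs---yielding max regrets $0.1$ at $L=1$ and $0.5$ at $L'=2$; your added commitment constraint with $p=1/2$ turns out to be non-binding at the optimum in both programs, so you could drop it and recover the paper's simpler form.
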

\begin{theorem}\label{thm:rewardonly}
If the transition dynamics does not vary across MDPs in environment $\mathcal{E}$, i.e. $\forall k, k', P(k)=P(k')$, and $\Pi_c \cap \Pi_L \neq \emptyset$ for boundary $L$. 
Then for any $L'>L$ we have $\Pi_c \cap \Pi_{L'} \neq \emptyset$, and 
\begin{align*}
\rho_c^{\pi_{L'}^*} \leq \rho_c^{\pi_{L}^*},
\end{align*}
where $\pi_L^*$ and $\pi_{L'}^*$ are the policies derived by CCL using boundaries $L$ and $L'$, respectively. \end{theorem}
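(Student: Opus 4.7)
The plan is to prove the stronger claim that every $L$-updates policy in $\Pi_c$ admits an equivalent $L'$-updates policy in $\Pi_c$ with identical expected reward in every MDP. Applying this realization to the minimax-regret-optimal $L$-updates policy $\pi^*_L$ immediately yields both feasibility of the level-$L'$ CCL program and the inequality $\rho_c^{\pi^*_{L'}} \leq \rho_c^{\pi^*_L}$.

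Given an arbitrary $\pi^L \in \Pi_c \cap \Pi_L$, I would construct $\pi^{L'}$ as follows: keep $\pi^{L'}(a \mid b_t) = \pi^L(a \mid b_t)$ for $t < L$, and for $t \geq L$ define
\begin{align*}
\pi^{L'}(a \mid b_t) &= \sum_{b_L \in \mathcal{B}_L^{b_0}} \nu(b_L \mid b_t)\,\pi^L(a \mid s_t, b_L), \quad L \leq t < L', \\
\pi^{L'}(a \mid s_t, b_{L'}) &= \sum_{b_L \in \mathcal{B}_L^{b_0}} \nu(b_L \mid b_{L'})\,\pi^L(a \mid s_t, b_L), \quad t \geq L',
\end{align*}
where $\nu(b_L \mid b_\tau)$ denotes the conditional probability of the time-$L$ knowledge state given the time-$\tau$ knowledge state under $\pi^L$. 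I would then translate this into occupancy measures $(y^{L'}, x^{L'})$ to exhibit a feasible solution of the program in Figure~\ref{fig:CCLprogram} at level $L'$. Two things must be verified: (i) $\nu$ is well-defined, i.e., does not depend on which MDP is the true one, so that $\pi^{L'}$ is indeed a single MDP-independent policy; and (ii) $\pi^{L'}$ induces the same marginal distribution over $(B_t, A_t)$ in every MDP as $\pi^L$, so that $U^{\pi^{L'}}(k) = U^{\pi^L}(k)$ and the commitment constraint is preserved.

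The main obstacle, and the only place the shared-transitions hypothesis enters, is verifying (i). For any history $h_t$ compatible with knowledge states $b_L$ and $b_t$ at times $L$ and $t$, its probability under $\pi^L$ in MDP $k$ factors as $\prod_{\tau=0}^{t-1} \pi^L(a_\tau \mid \cdot)\,P_{s_\tau a_\tau}^{s_{\tau+1}}(k)\,\mathbf{1}[r_{\tau+1} = R_{s_\tau a_\tau}(k)]$; when $P(k) \equiv P$, the first two factors become independent of $k$, while the reward indicator equals $1$ for exactly the $k \in \kappa_t$ by definition of the knowledge state. Hence $\Pr_{\pi^L, k}(B_L = b_L, B_t = b_t)$ is identical across all $k \in \kappa_t$, so $\nu(b_L \mid b_t)$ is MDP-independent. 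Item (ii) then follows by a straightforward induction on $t$: the inductive step equates $\Pr_{\pi^{L'},k}(B_{t+1} = b_{t+1})$ with $\Pr_{\pi^L,k}(B_{t+1} = b_{t+1})$ using the induction hypothesis together with the identity $\pi^{L'}(a \mid b_t) = \Pr_{\pi^L, k}(A_t = a \mid B_t = b_t)$, which is immediate from the definition of $\nu$. Matching marginals yields matching utilities and commitment probabilities, completing the argument.
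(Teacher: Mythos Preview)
Your high-level strategy matches the paper's: show $\Pi_c\cap\Pi_L\subseteq\Pi_c\cap\Pi_{L'}$ (up to equivalence of occupancy measures), with the shared-transitions hypothesis entering exactly through the MDP-independence of the posterior over $b_L$. Your argument for (i) is correct and is indeed the crux. The paper only spells out the step $L'\!=\!L+1$ and then appeals to the standard fact that any occupancy measure in a \emph{known} MDP is realized by some Markov policy; you try instead to write down that Markov policy explicitly, and that is where a gap appears.

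The explicit tail policy $\pi^{L'}(a\mid s_t,b_{L'})=\sum_{b_L}\nu(b_L\mid b_{L'})\,\pi^L(a\mid s_t,b_L)$ does \emph{not} in general reproduce $\pi^L$'s state--action marginals for $t>L'$. The weights $\nu(b_L\mid b_{L'})$ ignore that, under $\pi^L$, the later env-state $S_t$ is correlated with $B_L$ through the $b_L$-dependent Markov policy; the correct conditional action distribution is $\Pr_{\pi^L}(A_t=a\mid S_t=s_t,\,B_{L'}=b_{L'})=\sum_{b_L}\Pr_{\pi^L}(B_L=b_L\mid S_t=s_t,B_{L'}=b_{L'})\,\pi^L(a\mid s_t,b_L)$, and the inner posterior generally differs from $\nu(b_L\mid b_{L'})$. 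A two-branch example makes this concrete: if from a common $b_{L'}$ the two possible $b_L$'s send the agent deterministically to distinct states $X$ and $Y$ at time $L'+1$, then under $\pi^L$ the time-$(L'+2)$ state is $\pi^L(\cdot\mid X,b_L^1)$ or $\pi^L(\cdot\mid Y,b_L^2)$ with the branch probabilities, whereas your $\pi^{L'}$ mixes both $b_L$-policies at \emph{each} of $X$ and $Y$, producing a strictly different marginal. Correspondingly, your induction step for (ii) relies on the identity $\pi^{L'}(a\mid b_t)=\Pr_{\pi^L}(A_t=a\mid B_t=b_t)$, which is valid only for $t\le L'$; for $t>L'$ the policy no longer conditions on $b_t$, and the identity fails.

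The easy fix is to drop the explicit averaging formula and instead take $x^{b_{L'}}_{sa}(k)$ to be $\pi^L$'s own occupancy conditioned on $B_{L'}=b_{L'}$; then $\pi^{L'}(a\mid s_t,b_{L'}):=x^{b_{L'}}_{s_ta}(k)\big/\sum_{a'}x^{b_{L'}}_{s_ta'}(k)$. Your argument for (i), extended to condition on $S_t$ as well (all factors are either policy terms, shared-transition terms, or reward indicators that equal $1$ for $k\in\kappa_{L'}$), shows this ratio is $k$-independent, so constraint~\eqref{program:CCLxconsistency} holds and the program at level $L'$ is feasible with the same utilities and commitment probabilities as $\pi^L$. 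This is precisely the route the paper takes, phrased as ``the occupancy measure of $\pi_L$ conditioned on $b_{L+1}$ can be achieved by a stochastic Markov policy.''
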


\subsection{Commitment Constrained Iterative Lookahead}
Commitment Constrained Iterative Lookahead (CCIL), as the name suggests, iteratively applies the CCL technique during execution.
Suppose starting from the initial knowledge state the agent executes the first $L$ actions prescribed by a minimax regret $L$-updates CCL policy $\pi^*_{L}$ derived by solving the program in Figure \ref{fig:CCLprogram} and ends up in knowledge state $b_L\in\mathcal{B}_L^{b_0}$.
Instead of executing the remaining $T-L$ actions prescribed by $\pi^*_{L}$, the agent can re-construct a new $L$-updates policy with an initial knowledge state now $b_L$.
This policy reconstruction is helpful because the agent gets more knowledge about the true MDP by observing the transitions and reward in the first $L$ steps.
Due to the changed initial knowledge state, naively sticking with the original commitment probability might lead to the difficulty stated in Observation \ref{obs:rhoUpdate}.
To respect the commitment semantics, the agent should instead plan with a commitment probability updated as follows.
Let $b_L=\langle s_L,\kappa_L \rangle$, where $s_L$ is the current env-state, and $\kappa_L$ is the set of MDPs consistent with the history up to time step $L$.
For every possible MDP $k\in\kappa_L$, update the commitment probability as the \emph{achieved} probability if the agent were to stick with $\pi^*_{L}$ from $s_L$:
\begin{align}\label{eq:commitemtProbUpdate}
p(k) = \Pr_{\pi^*_{L}} ( S_T \in \Phi | S_L = s_L; k ).
\end{align}
Then, the agent can construct a new \la by solving the program in Figure \ref{fig:CCLprogram} with the following modifications:
\begin{enumerate}
\item Start from current knowledge state $b_L$ instead of $b_0$, i.e. replace every $b_0$ with $b_L$, and $\kappa_0$ with $\kappa_L$ in the program.
\item Plan with the updated commitment probabilities, i.e. replace $p$ in the last constraint of the program with $p(k)$ calculated as Equation \eqref{eq:commitemtProbUpdate}.
\item  Replace $U^*_{c}(k)$ with $U^*_{s_L,p(k)}(k)$ which is defined as the optimal objective value of the following problem:
\begin{align}
\label{eq:objectiveUpdate}
&\max_{\pi}\E_{\pi}\left[\sum_{t=L}^{T-1}R_{S_{t}A_{t}}(k)|S_L=s_L; k\right]\\
&\mathrm{subject~to~}  \Pr_{\pi} ( S_T \in \Phi | S_L = s_L; k )\geq p(k) \notag
\end{align}
which is the expected cumulative reward of the optimal policy that achieves commitment probability $p(k)$ from current env-state $s_L$ in MDP $k$.
\end{enumerate}
This modified program is guaranteed to be feasible because the original \la $\pi^*_{L}$ itself is a solution.
CCIL iteratively applies the above procedure every $L$ steps.
We outline CCIL in Algorithm \ref{algo:CCIL}, and Theorem \ref{thm:CCIL} formally states that it respects our commitment semantics.

\begin{theorem}\label{thm:CCIL}
If $\Pi_c \cap \Pi_L \neq \emptyset$ holds for commitment $c$ and boundary $L>0$, let $\pi^{\mathrm{IL}}_{L}$ be the history-dependent policy defined as Algorithm \ref{algo:CCIL}. We have $\pi^{\mathrm{IL}}_{L}\in\Pi_c$, i.e., CCIL respects the commitment semantics.
\end{theorem}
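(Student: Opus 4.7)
The plan is to fix an arbitrary MDP $k\in\kappa_0$ and prove by backward induction over the sequence of CCIL re-planning times $t_0=0, t_1=L, t_2=2L,\ldots,t_N=T$ that the CCIL trajectory in MDP $k$ achieves the committed success probability from $s_0$. Let $\pi^{(i)}$ denote the $L$-updates policy constructed at re-planning step $i$ (starting from the current knowledge state $b_{t_i}=\langle s_{t_i},\kappa_{t_i}\rangle$) with commitment probability $p^{(i)}(k)$, and let $\pi^{\mathrm{CCIL}}_{\geq i}$ denote the composite policy that executes $\pi^{(i)}$ for the next $L$ steps, then switches to $\pi^{(i+1)}$, and so on. By the feasibility of each re-planning sub-program (which the paper notes is guaranteed because $\pi^{(i-1)}$ is itself a feasible solution), $\pi^{(i)}$ satisfies $\Pr_{\pi^{(i)}}(S_T\in\Phi\mid S_{t_i}=s_{t_i};k)\geq p^{(i)}(k)$, and by update rule \eqref{eq:commitemtProbUpdate} we have $p^{(i+1)}(k)=\Pr_{\pi^{(i)}}(S_T\in\Phi\mid S_{t_{i+1}}=s_{t_{i+1}};k)$.

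The inductive claim is that, for every re-planning step $i$ and every reachable state $s_{t_i}$,
\begin{align*}
\Pr_{\pi^{\mathrm{CCIL}}_{\geq i}}(S_T\in\Phi\mid S_{t_i}=s_{t_i};k)\;\geq\;\Pr_{\pi^{(i)}}(S_T\in\Phi\mid S_{t_i}=s_{t_i};k).
\end{align*}
The base case $i=N-1$ is immediate, since no further re-planning occurs and $\pi^{\mathrm{CCIL}}_{\geq N-1}=\pi^{(N-1)}$. For the inductive step, I would decompose over the env-state reached at time $t_{i+1}$:
\begin{align*}
\Pr_{\pi^{\mathrm{CCIL}}_{\geq i}}(S_T\in\Phi\mid S_{t_i}=s_{t_i};k)=\sum_{s}\Pr_{\pi^{(i)}}(S_{t_{i+1}}=s\mid S_{t_i}=s_{t_i};k)\,\Pr_{\pi^{\mathrm{CCIL}}_{\geq i+1}}(S_T\in\Phi\mid S_{t_{i+1}}=s;k),
\end{align*}
then apply the inductive hypothesis to bound the second factor by $\Pr_{\pi^{(i+1)}}(S_T\in\Phi\mid S_{t_{i+1}}=s;k)$, use feasibility of $\pi^{(i+1)}$ and update rule \eqref{eq:commitemtProbUpdate} to replace this by $\Pr_{\pi^{(i)}}(S_T\in\Phi\mid S_{t_{i+1}}=s;k)$, and finally invoke the tower property of conditional probability under $\pi^{(i)}$ to collapse the sum back to $\Pr_{\pi^{(i)}}(S_T\in\Phi\mid S_{t_i}=s_{t_i};k)$. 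Taking $i=0$ and using that $\pi^{(0)}\in\Pi_c\cap\Pi_L$ by definition of CCL then gives $\Pr_{\pi^{\mathrm{IL}}_L}(S_T\in\Phi\mid S_0=s_0;k)\geq p$.

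The main obstacle I anticipate is a subtlety about what these conditional probabilities are really conditioning on. Update rule \eqref{eq:commitemtProbUpdate} writes the new commitment probability as a function of $s_{t_{i+1}}$ alone, but after its knowledge-state-update boundary an $L$-updates policy in general depends on the full knowledge state $b_{t_{i+1}}=\langle s_{t_{i+1}},\kappa_{t_{i+1}}\rangle$, not just on the env-state. I plan to handle this by carrying every conditional probability along the realized sample path: once we condition on the true MDP being $k$, the observed history at $t_{i+1}$ deterministically determines both $s_{t_{i+1}}$ and $\kappa_{t_{i+1}}$, so each probability in the chain should be read as conditioned on the realized knowledge state, which is precisely what both the update rule and the re-planning sub-problem use. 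With this reading, the tower-property step and the inequality chain go through as stated, and the backward induction closes the argument.
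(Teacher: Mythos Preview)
Your approach is essentially the same as the paper's: both arguments fix $k\in\kappa_0$, condition on the state reached at the next re-planning boundary, and use the fact that each re-planned policy satisfies the updated commitment constraint \eqref{eq:commitemtProbUpdate} to compare CCIL's success probability to that of the previous CCL policy. The paper writes out only the first unrolling (from $t=0$ to $t=L$) and justifies the key inequality $\Pr_{\pi^{\mathrm{IL}}_L}(S_T\in\Phi\mid B_L=b_L;k)\ge \Pr_{\pi_L}(S_T\in\Phi\mid B_L=b_L;k)$ by appealing informally to the recursive structure of Algorithm~\ref{algo:CCIL}; your explicit backward induction is a cleaner way to make that step rigorous.

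One presentational point: the paper sidesteps the env-state versus knowledge-state subtlety you flag by conditioning directly on $B_L=b_L$ rather than on $S_L=s_L$. If you rewrite your decomposition as a sum over knowledge states $b_{t_{i+1}}\in\mathcal{B}_L^{b_{t_i}}$, then the inductive hypothesis, the feasibility constraint of the re-planning program, and update rule \eqref{eq:commitemtProbUpdate} all line up without needing the ``realized sample path'' gloss, and the tower property is immediate.
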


\begin{algorithm}
\caption{CCIL}
\label{algo:CCIL}
\SetAlgoLined
\SetKwInOut{Input}{Input}
\Input{Environment $\mathcal{E}=\langle \mathcal{S}, \mathcal{A}, s_0, \{P(k), R(k)\}_{k=1}^{K} \rangle$, commitment $c=\langle \Phi, T, p \rangle$, \\
integer $L\in(0, T]$ such that $\Pi_c \cap \Pi_L \neq \emptyset$; }
$b_0 \leftarrow \langle s_0, \kappa_0 \rangle$\;
$\pi_0 \leftarrow$ \la derived by solving the program in Figure \ref{fig:CCLprogram}\;
$t\leftarrow 0$\;

\While{$t< T$}
{
\For{$i=1,2,...,L$}
{
Take action $a_t\sim\pi_t(\cdot|b_t)$ and observe reward-next state transition $(s_t,a_t,r_t,s_{t+1})$\;
Update knowledge state as $b_{t+1} =\langle s_{t+1},\kappa_{t+1}\rangle$\;
$\pi_{t+1}\leftarrow\pi_t$\;
$t\leftarrow t+1$\;
}
\For{$k\in\kappa_t$}
{
$p(k) \leftarrow \Pr_{\pi_t} ( S_T \in \Phi | S_t = s_t; k )$\;
$U^*_{s_t,p(k)}(k)\leftarrow$ optimal objective value of \eqref{eq:objectiveUpdate}\;
}
$\pi_t\leftarrow$ policy derived by solving a modified version of the program in Figure \ref{fig:CCLprogram}: replacing every $b_0$ with $b_t$, $\kappa_0$ with $\kappa_t$, $p$ with $p(k)$, and $U^*_c(k)$ with $U^*_{s_t,p(k)}(k)$\;
}
\end{algorithm}

\subsection{MILP Formulation}
The CCL program in Figure \ref{fig:CCLprogram} introduces quadratic equality constraints \eqref{program:CCLyconsistency} and \eqref{program:CCLxconsistency} to ensure that the action selection rules derived from occupancy measures in all possible MDPs are identical.
These  constraints make the optimization problem non-convex and hard to solve.
In practice, many math-programming solvers are unable to handle programs with quadratic equality constraints.
Although some solvers can deal with such programs, they often need to take as input a feasible solution as the starting point, but finding a feasible solution by itself might be difficult, and the final solutions are usually sensitive to starting points.
Here we introduce a straightforward modification to the CCL program in Figure \ref{fig:CCLprogram} that replaces the quadratic equality constraints with mixed integer constraints, and therefore reformulates it into a Mixed Integer Linear Program (MILP) that has many available solvers.
The cost of this reformulation is that the derived policy is restricted to be deterministic.

Specifically, we introduce indicators $\Delta$ into the CCL program in Figure \ref{fig:CCLprogram} as additional decision variables with the following constraints:
\begin{align*}
& \forall b \in \mathcal{B}_{[0,L]}^{b_0}, a ~~\Delta_{ba}\in \{0,1\};~\text{(choose $a$ in $b$ iff $\Delta_{ba}=1$)}\\
&\forall b \in \mathcal{B}_{[0,L]}^{b_0} ~~ \sum_{a}\Delta_{ba}\leq 1;~ \text{(at most one action is chosen)}\\
& \forall k, b \in \mathcal{B}_{[0,L]}^{b_0}, a ~~ y_{ba}(k)\leq\Delta_{ba};~\text{($y$ is consistent with $\Delta$)}\\
& \forall b_L\in\mathcal{B}_L^{b_0}, s, a ~~\Delta^{b_L}_{sa}\in \{0,1\};~\text{(choose $a$ in $s$ iff $\Delta^{b_L}_{sa}=1$)}\\
&\forall b_L\in\mathcal{B}_L^{b_0}, s ~~\sum_{a}\Delta^{b_L}_{sa}\leq 1;~\text{(at most one action is chosen)}\\
& \forall k , b_L\in\mathcal{B}_L^{b_0}, s, a~~x^{b_L}_{sa}(k)\leq\Delta^{b_L}_{sa};~\text{($x$ is consistent with $\Delta$).}
\end{align*}
Then, any feasible solution with the above constraints replacing constraints \eqref{program:CCLyconsistency} and \eqref{program:CCLxconsistency} of the program in Figure \ref{fig:CCLprogram} yields a deterministic policy via Equation \eqref{policy:CCL}, which can be alternatively expressed using the indicator variables:
\begin{align}
\label{policy:CCLdeterministic}
\pi_{L} (a|h_t) =
\begin{cases} 
\pi_{L} (a|b_t) =1_{\{\Delta_{b_ta}=1\}}& t < L\\
\pi_{L} (a|s_t,b_L) =1_{\{\Delta^{b_L}_{s_ta}=1\}} & t \geq L.
\end{cases}
\end{align}
Note that the objective function of the program in Figure \ref{fig:CCLprogram} is non-linear due to the max operator.
However, it is easy to reformulate it into a linear objective function with a set of linear constraints.
In particular, one can introduce a scalar variable $z$ to replace the objective function \eqref{program:CCLObj} with
\begin{align*}
\min_{x,y} z
\end{align*}
and add the following constraints on $z$
\begin{align*}
\forall k\in\kappa_0 \quad z\geq  U^*_{c}(k)-U(k).
\end{align*}
With the above modifications, the program in Figure \ref{fig:CCLprogram} becomes a MILP.
The derived policy via \eqref{policy:CCLdeterministic} using an optimal solution to this MILP is a  deterministic policy that minimizes the maximum regret of all deterministic policies in $\Pi_c\cap\Pi_L$ (assuming this intersection is non-empty).

\section{Empirical Results}
We evaluate the performance of CCL and CCIL, under various choices of the boundary $L$, first on the Twin-States domain of Figure \ref{fig:TwinStates} that has uncertain rewards, and second on the Slippery T-Maze gridworld domain of Figure~\ref{fig:TMaze} that has uncertain transition dynamics.
CCL and CCIL MILP programs are solved using CPLEX 12.6.
\subsection{Results on the Twin-States Domain}
The main goals of the experiments on this domain are 
1) to provide a constructive proof of Observations \ref{obs:MDPsBest} to \ref{thm:Stochastic},
2) to evaluate the loss of the MILP formulation in a domain where an exact stochastic CCL policy can be computed, and 
3) to compare the performance of CCL and CCIL using various boundaries against simple policy construction methods.

\noindent\textbf{Short horizon.}
Here we set the time horizon to two so that we can find an exact stochastic minimax regret CCL policy\footnote{This exact policy is found not by solving the program in Figure \ref{fig:CCLprogram} but as follows.
Note that with only two actions available, the agent should not move to state B because it has to move back to state A using the second action and will get no reward at all.
We exploit this fact to compute an exact stochastic CCL policy, i.e. an exact solution to the program in Figure \ref{fig:CCLprogram} by solving another equivalent mathematical program:
1) Introduce $\pi(a|b),\pi(a|s;b_L)$ as decision variables, which are the probability of choosing action $a$ under an \la for $t<L$ and $t\geq L$, respectively. Because choosing $a_0$ is sub-optimal, we need to only consider $a\in\{a_1,a_2\}$.
2) Express the maximum regret as the objective function, the only constraint is that $\pi(\cdot|b),\pi(\cdot|s;b_L)$ should be valid probability measures.
The commitment semantics is automatically satisfied because we don't need to include action $a_0$.} and compare it with that found using the MILP formulation.

Figure \ref{fig:TwinStatesShortHorizon} plots the maximum regret under various choices of boundary $L$ using exact CCL, MILP-CCL, and MILP-CCIL.
Because exact CCL achieves better performance than MILP-CCL, it is clear that the derived policy must be stochastic, which provides a constructive proof of Observation \ref{thm:Stochastic}.


\begin{figure}[ht]
\centering
\includegraphics[scale=0.6]{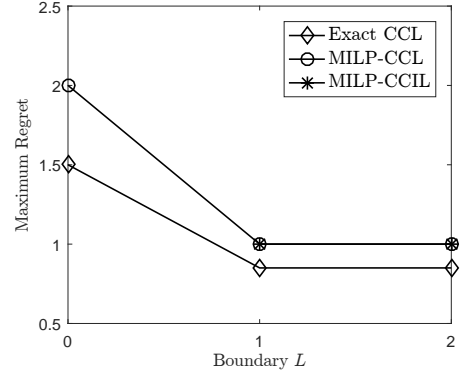}
\caption{Maximum regret in the Twin-States domain of exact CCL, MILP-CCL, and MILP-CCIL when horizon is two. Markers ``*'' for MILP-CCIL overlap with makers ``o'' for MILP-CCL when $L=1,2$. (Note that MILP-CCIL is not defined for $L = 0$).}
\label{fig:TwinStatesShortHorizon} 
\end{figure}

\noindent\textbf{Longer horizon.}
Here we are concerned with comparing MILP-CCL and MILP-CCIL against the following baseline policy construction methods mentioned in Observation \ref{obs:MDPsBest} and Observation \ref{obs:Greedy} under longer than 2 time horizon.
\begin{itemize}
\item \textit{MDPs-Best}:
First find the optimal policies respecting the commitment semantics for every possible MDP, i.e.
$
\pi^*_k = \argmax_{\pi\in\Pi_c} U^{\pi}(k)
$.
The MDPs-Best policy is the one out of $\{\pi^*_k\}_{k=1}^{K}$ that minimizes the maximum regret.
\item \textit{Greedy}:
Select the next action that minimizes the maximum one-myopic regret over the possible MDPs consistent with the current history, i.e.
\begin{align*}
a_t=\argmin_{a\in\mathcal{A}_t} \max_{k\in\kappa_t} \{ \max_{a'\in\mathcal{A}_t}R_{s_ta'}(k)-R_{s_ta}(k)\}
\end{align*}
where $\mathcal{A}_t$ is the set of actions available at time $t$ that are chosen to guarantee the commitment semantics is respected.
For this domain, we let $\mathcal{A}_t=\{a_0, a_1, a_2\}$ if $t<T-1$.
When $t=T-1$, i.e. for the last action, $\mathcal{A}_t=\{a_0\}$ if $s_t$ is B, or $\mathcal{A}_t=\{a_1, a_2\}$ if $s_t$ is A.
\end{itemize}
Table \ref{table:TwinStatesMaxRegret} summarizes the results.
For MILP-CCL,
performance is monotonic. 
It takes three steps to resolve the reward uncertainty by taking action $a_2$ in state A, moving to state B, and then taking action $a_2$ again.
This explains why $L$ larger than three does not improve the performance.
If the horizon is large enough, the agent should explore the reward of action $a_2$ in both states, then execute the action with the highest reward before going back to state A to respect the commitment semantics.
We find that is exactly what MILP-CCL($L\in[3,T]$) and MILP-CCIL($L=1$) do when horizon $T\geq 7$, which causes a max regret of 5 when reward of $a_2$ is the lowest (i.e., 1 in state A and 0 in state B).
\begin{table}[ht]
\centering
\caption{Max regret with varying horizon in Twin-States.}
\label{table:TwinStatesMaxRegret}
\begin{tabular}{c||c|c|c|c|c|c}
\hline
Horizon $T$&3  &  5 &  7 &  9 &  11 & 13 \\ \hhline{=||=|=|=|=|=|=}
Greedy     &5  &  5   &  9&  13   &  17  & 21  \\
MDPs-Best   &3  &  7  &  13&  19   &  25  &  31   \\
MILP-CCL, $L=0$      &3  &  6    &   10   &  15 & 19 & 22 \\
MILP-CCL, $L=1, 2$   &1  &  3  &  6&  8   &  9  &  11  \\ 
MILP-CCL, $L\in[3,T]$  &1  &  3  &  5 &  5   &  5 &  5   \\
MILP-CCIL, $L=1$ &1  &  3  &  5 &  5   &  5 &  5  \\
\hline
\end{tabular}
\end{table}

\subsection{Results on the Slippery T-Maze}
The main goals of the experiments reported here are to evaluate CCL and CCIL with the MILP formulation in a domain where
1) the transition dynamics are uncertain, and
2) the commitment probability is less than one, and thus stochastic action selection is more likely to be crucial to achieving better performance.
\begin{figure}[ht]
\centering
\includegraphics[scale=0.5]{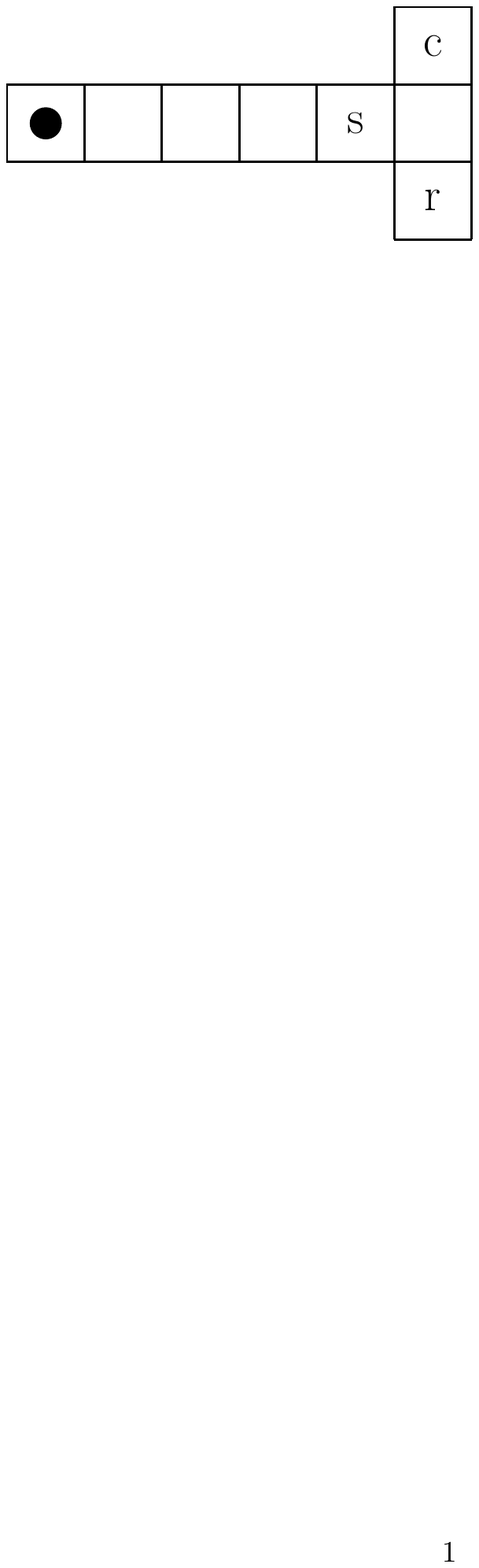}
\caption{Slippery T-Maze. See text for details.
}
\label{fig:TMaze} 
\end{figure}
The domain consists of two corridors that are connected as shown in Figure \ref{fig:TMaze}.
The agent starts in the cell with a black dot and can move in four directions.
Staying in cell ``r'' results in a positive unit of reward every time step, but the agent commits to being in cell ``c'' at the time horizon.
There are an uncertain number of consecutive slippery cells between cell ``s'' and the black dot cell. 
In a slippery cell movement actions succeed with probability .8.
Cell ``s'' is known to be slippery. 
The agent does not know in advance the number of slippery cells, which makes the transition dynamics uncertain.

Figure \ref{fig:TMazePlot} shows the results under commitment time horizon $T=10$ and commitment probability $p=0.6$.
The maximum regret of MILP-CCL is equal to the objective value of the mathematical program, which can be directly obtained, while the performance of MILP-CCIL is estimated by averaging many simulated episodes. The latter is seen to achieve better maximum regret than the former for low values of $L$.
Interestingly, and perhaps unexpectedly, unlike for the Twin-States domain, the performance of MILP-CCL is not monotonic in boundary $L$. The explanation lies in the fact that though the MILP-CCL policy is a deterministic function of history, the part of the policy that occurs after the boundary $L$ when viewed as a function of env-state alone is stochastic. This is because the knowledge-state at time $L$ is stochastic due to the stochastic transition dynamics (recall that the policy after $L$ is allowed to condition on the knowledge state at time $L$).
Thus if $L$ is too large, the agent cannot take advantage of this stochasticity and suffers larger regret than for intermediate values of $L$. On the other hand if $L$ is too small, then the knowledge-state at $L$ is not informative enough to be helpful. Also interestingly, MILP-CCIL can take advantage of this implicit stochasticity using smaller $L$. However, when $L$ is large, MILP-CCIL achieves the same poor performance as MILP-CCL, because when $L$ is large the agent is likely to be in the vertical corridor where it no longer gets new knowledge about how many slippery cells there are and therefore iterative lookahead does not help.

\begin{figure}[t!]
\centering
\includegraphics[scale=0.58]{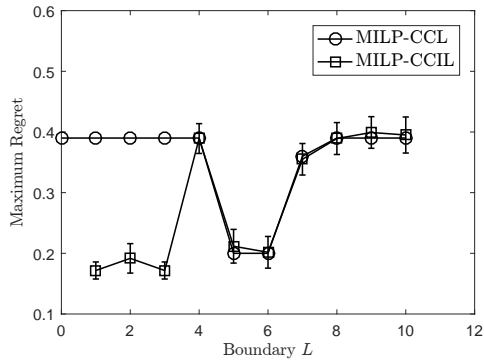}
\caption{Maximum regret in Slippery T-Maze of MILP-CCL and MILP-CCIL using various choices of the boundary under horizon $T=10$ and probability $p=0.6$.}
\label{fig:TMazePlot} 
\end{figure}
\section{Conclusion}
In this paper we developed a commitment semantics for achieving a specific state by a certain time with at least a certain probability in environments that have non-probabilistic uncertainty about the possible MDP the committing agent is facing as well as probabilistic uncertainty about the consequences of actions (within the true MDP). 
Our Commitment Constrained Lookahead (CCL) family of algorithms plan (offline) low-regret policies respecting the commitment semantics.
We provided analysis and empirical results on the impact of the knowledge-state-update boundary, which is an input-parameter to CCL, on the performance of the planned policy.
We extended CCL to Commitment Constrained Iterative Lookahead (CCIL), which is an iterative algorithm that adjusts the policy online. Exact CCL and CCIL require solving non-convex programs and thus we also introduced a MILP formulation that restricts the agent to deterministic policies. 
Our empirical results indicate that the MILP versions of both CCL and CCIL outperform baseline methods, and that CCIL is more robust than CCL.
\newpage
\noindent{\bf Acknowledgments} This work was supported in part by the Air Force Office of Scientific Research under grant FA9550-15-1-0039. Any opinions, findings, conclusions, or recommendations expressed here are those of the authors and do not necessarily  reflect the views of the sponsors.
\bibliographystyle{aaai}
\bibliography{icaps17}

\newpage

\section*{Technical Proofs}
\begin{proof} [Proof of Theorem \ref{thm:CCL}]
We need to show any policy in $\Pi_c \cap \Pi_L$ one-to-one maps to a feasible solution to the program in Figure \ref{fig:CCLprogram}.

For any policy $\pi\in\Pi_c\cap\Pi_L$, we are going to define vectors $m(\pi)$ and $n(\pi)$ such that they satisfy the constraints of the program in Figure \ref{fig:CCLprogram} if treated as the decision variables $x$ and $y$, respectively.
Given any policy $\pi\in\Pi_c\cap\Pi_L$, let $n(\pi,k)$ be its knowledge state-action occupancy measure if the true MDP is $k$ for knowledge states in $\mathcal{B}_{[0, L]}^{b_0}$, and $m(\pi,k)$ be its env-state-action occupancy measure for env-states from time step $L$ on.
\begin{align*}
\forall &b\in\mathcal{B}_{[0, L]}^{b_0}, a&\\
&n_{ba}(\pi,k) = \Pr_{\pi}(B_t=b,A_t=a|B_0=b_0;k)\\
\forall &s, a\\
&m^{b_L}_{sa}(\pi,k) \\
=&\begin{cases}
\Pr_{\pi}(S_t=s,A_t=a, B_L=b_L|B_0=b_0;k)  & t \geq L\\
0   & t < L
\end{cases}
\end{align*}
where $t$ is the time of knowledge state $b$.
We next show $n_{ba}(\pi,k)$ satisfies the constraints if treated as $y_{ba}(k)$ and $m^{b_L}_{ba}(\pi,k)$ satisfies the constraints if treated as $x^{b_L}_{ba}(k)$.

If treated as $y_{ba}(k)$, $n_{ba}(\pi,k)$ satisfies constraint \eqref{program:CCLygeqzero} because $n_{ba}(\pi,k)\geq 0$.

If treated as $y_{ba}(k)$, $n_{ba}(\pi,k)$ satisfies constraint \eqref{program:CCLydynamics} because if $b'=b_0$, for the LHS of \eqref{program:CCLydynamics} we have 
\begin{align*}
&\sum_{a'}n_{b'a'}(\pi,k)\\
=&\sum_{a'}n_{b_0a'}(\pi,k) \tag*{(because $b'=b_0$)}\\
=&\sum_{a'}\Pr_{\pi}(B_0=b_0,A_0=a'|B_0=b_0;k)\\
=&\Pr_{\pi}(B_0=b_0|B_0=b_0;k)\\
=&1
\end{align*}
and for the RHS of \eqref{program:CCLydynamics} we have
\begin{align*}
&\text{RHS of \eqref{program:CCLydynamics} }\\
=&\sum_{b, a}n_{ba}(\pi,k)\tilde{P}_{ba}^{b'}(k)+\delta_{b'b_0}1_{\{k\in\kappa'\}}\\
=&\sum_{b, a}n_{ba}(\pi,k)\tilde{P}_{ba}^{b_0}(k)+\delta_{b_0b_0}1_{\{k\in\kappa_0\}}\tag*{(because $b'=b_0$)}\\
=&0 + 1 =1 = \text{ LHS of \eqref{program:CCLydynamics} }
\end{align*}
If $b'\in \mathcal{B}^{b_0}_{[0,L]} \setminus \{b_0\}$, for the LHS of \eqref{program:CCLydynamics} we have
\begin{align*}
&\sum_{a'}n_{b'a'}(\pi,k)\\
=&\sum_{a'}\Pr_{\pi}(B_{t'}=b',A_{t'}=a'|B_0=b_0;k) \tag*{($t'$ is time of $b'$)}\\
=&\Pr_{\pi}(B_{t'}=b'|B_0=b_0;k)
\end{align*}
and for the RHS of \eqref{program:CCLydynamics} we have
\begin{align*}
&\text{RHS of \eqref{program:CCLydynamics} }\\
=&\sum_{b, a}n_{ba}(\pi,k)\tilde{P}_{ba}^{b'}(k)+\delta_{b'b_0}1_{\{k\in\kappa'\}}\\
=&\sum_{b, a}n_{ba}(\pi,k)\tilde{P}_{ba}^{b'}(k) \tag*{(because $b'\neq b_0$)}\\
=&\sum_{b, a}\Pr_{\pi}(B_{t}=b,A_{t}=a|B_0=b_0;k) \Pr(B_{t'}=b'|b,a;k) \tag*{($t$ is time of $b$, $t'$ is time of $b'$)}\\
=&\Pr_{\pi}(B_{t'}=b'|B_0=b_0;k)=\text{ LHS of \eqref{program:CCLydynamics} }
\end{align*}

If treated as $y_{ba}(k)$, $n_{ba}(\pi,k)$ satisfies constraint \eqref{program:CCLydynamics} because
\begin{align*}
\forall b\in\mathcal{B}_{[0, L)}^{b_0}, a\\
n_{ba}(\pi,k)=& \Pr_{\pi}(B_t=b,A_t=a|B_0=b_0;k)\\
=&\pi(a|b)\Pr_{\pi}(B_t=b|B_0=b_0;k)
\end{align*}
and therefore for any $k$
\begin{align*}
&\frac{y_{ba}(k)}{\sum_{a'}y_{ba'}(k)}\\
=&\frac{\pi(a|b)\Pr_{\pi}(B_t=b|B_0=b_0;k)}{\sum_{a'}\Pr_{\pi}(B_t=b,A_t=a'|B_0=b_0;k)}\\
=&\pi(a|b)
\end{align*}
which is independent of $k$.

Similarly one can show that $m^{b_L}_{ba}(\pi,k)$ satisfies constraints \eqref{program:CCLxgeqzero}, \eqref{program:CCLxdynamics}, and \eqref{program:CCLxconsistency} if treated as $x^{b_L}_{ba}(k)$.

If treated as $x^{b_L}_{ba}(k)$, $m^{b_L}_{ba}(\pi,k)$ also satisfies constraint \eqref{program:CCLcommitment} because $\pi\in\Pi_c$.

Constraints \eqref{program:CCLUk} and \eqref{program:CCLybLk} are naturally satisfied because they are defining new variables.

Now for the other direction, given a feasible solution $x, y$ to the program, let policy $\pi$ be the derived policy via \eqref{policy:CCL}.
Then $\pi$ is in $\Pi_L$ by definition.
Further we have $m^{b_L}_{sa}(\pi,k)=x^{b_L}_{sa}(k), n_{ba}(\pi,k)=y_{ba}(k)$, and therefore $\pi$ is also in $\Pi_c$ because $x$ satisfies commitment constraint \eqref{program:CCLcommitment}.
\end{proof}

\begin{proof} [Proof of Theorem \ref{thm:Lzero}]
Because CCL with boundary $L$ finds a minimax regret policy in $\Pi_c\cap\Pi_L$, it is sufficient to show
\begin{align*}
\forall L>0, \Pi_0 \subset \Pi_L
\end{align*}
This holds because given any Markov policy $\pi_0\in\Pi_0$, we can define an \la $\pi_L\in\Pi_L$ that is equivalent to $\pi_0$
\begin{align*}
\pi_L (a|h_t) =
\begin{cases} 
\pi_L (a|b_t)=\pi_0 (a|s_t) & t < L\\ 
\pi_L (a|s_t, b_L)=\pi_0 (a|s_t)& t \geq L
\end{cases}
\end{align*}
Thus, we know that $\pi_0\in\Pi_L$.
\end{proof}

We first prove Theorem \ref{thm:rewardonly} and then Theorem \ref{thm:Lgeneral}.
\begin{proof}[Proof of Theorem \ref{thm:rewardonly}]
It's sufficient to show that the statement holds when $L'=L+1$.
We next show that given any policy $\pi_L \in \Pi_{L}$, there exists an $(L+1)$-updates policy, $\pi_{L+1}$, that mimics $\pi_L$ when $P(k)=P(k') ~\forall k, k'$, and therefore $\rho_c^{\pi_{L+1}^*} \leq \rho_c^{\pi_{L}^*}$.

For the first $L$ actions, an $(L+1)$-updates policy can map the current knowledge state to a distribution of the next actions identical to $\pi_{L}$.
The action that is going to take at time step $L$ by $\pi_{L}$ can also be recovered by an $(L+1)$-updates policy, which gives
\begin{align*} 
\pi_{L+1} (a|h_t) =
\begin{cases} 
\pi_{L+1} (a|b_t) =\pi_{L} (a|b_t) & t < L\\ 
\pi_{L+1} (a|b_L) =  \pi_{L} (a|s_L, b_L) & t=L
\end{cases}
\end{align*}
If $P(k)=P(k') ~\forall k, k'$, then $\pi_{L+1}$ can also recover $\pi_{L}$ for $t\geq L+1$.
To see this, note if $P(k)=P(k') ~\forall k, k'$ we have
\begin{align}
\label{eq:bLbLplusone}
\Pr_{\pi_L}(b_L|b_{L+1};k)=\Pr_{\pi_L}(b_L|b_{L+1};k'), \forall \pi_L, k, k'\in \kappa_{L+1}
\end{align}
Therefore, under any \la $\pi_L$ and conditioned on being in knowledge state $b_{L+1}$ at time step $L+1$, the agent thereafter selects actions according to $\pi_L(\cdot|s_t,b_L)$ with probability defined in Equation \eqref{eq:bLbLplusone} that does not depend on $k$.
Because the transition function is known, the occupancy measure of $\pi_L$ for $t\geq L+1$ conditioned on any $b_{L+1}$ can be achieved by a stochastic Markov policy from $b_{L+1}$, which can be expressed by an $(L+1)$-updates policy as $\pi_{L+1}(\cdot|s_t,b_{L+1})$.
\end{proof}

\begin{proof}[Proof of Theorem \ref{thm:Lgeneral}]
In general $P(k)=P(k') ~\forall k, k'$ does not hold, and therefore condition \eqref{eq:bLbLplusone} in the proof of Theorem \ref{thm:rewardonly} does not hold.
Let $b_{L+1}=\langle s_{L+1}, \kappa_{L+1}\rangle$ be a knowledge state at time step $L+1$.


Inspired by this we now give an example as a formal constructive proof.
In this example the env-state space is $\{0,1,2,3\}$ and let state 0 be the initial state.
There are two actions $a_0, a_1$ and $K=2$ possible MDPs.
The time horizon is three and the commitment probability is zero.
The transition dynamics and reward of these MDPs are as follows.
\begin{enumerate}
\item 
In MDP $k=1$, $\Pr(1|0,a;k=1)=0.9$, $\Pr(2|0,a;k=1)=0.1$, $\Pr(3|1,a;k=1)=\Pr(3|2,a;k=1)=1.0$ for both $a=a_0, a_1$.
State 3 is an absorbing state.
Doing $a_0$ in state 3 gives a positive unit of reward.
There's no reward elsewhere.
\item 
In MDP $k=2$, $\Pr(1|0,a;k=1)=0.1$, $\Pr(2|0,a;k=1)=0.9$, $\Pr(3|1,a;k=1)=\Pr(3|2,a;k=1)=1.0$ for both $a=a_0, a_1$.
State 3 is an absorbing state.
Doing $a_1$ in state 3 gives a positive unit of reward.
There's no reward elsewhere.
\end{enumerate}
The maximum regret when $L=2$ is 0.5, but $L=1$ can achieve a maximum regret of 0.1.
\end{proof}

\begin{proof}[Proof of Theorem \ref{thm:CCIL}]
We need to show
\begin{align*}
\Pr_{\pi^{\mathrm{IL}}_{L}}(S_T\in\Phi | S_0=s_0; k ) \geq p \quad \forall k\in\kappa_0.
\end{align*}
Let $\pi_{L}$ be the CCL $L$-updates policy derived from the program in Figure \ref{fig:CCLprogram}.
For any $k\in\kappa_0$, we can calculate the achieved commitment probability of $\pi^{\mathrm{IL}}_{L}$ by conditioning on the knowledge state it will visit at time $L>0$,
\begin{align*}
&\Pr_{\pi^{\mathrm{IL}}_{L}}(S_T\in\Phi | S_0=s_0; k )\\
=&\sum_{b_L}\Pr_{\pi^{\mathrm{IL}}_{L}}(B_L=b_L | S_0=s_0; k )\Pr_{\pi^{\mathrm{IL}}_{L}}(S_T\in\Phi | B_L=b_L; k )\\
=&\sum_{b_L}\Pr_{\pi_{L}}(B_L=b_L | S_0=s_0; k )\Pr_{\pi^{\mathrm{IL}}_{L}}(S_T\in\Phi | B_L=b_L; k )\\
\geq &\sum_{b_L}\Pr_{\pi_{L}}(B_L=b_L | S_0=s_0; k )\Pr_{\pi_{L}}(S_T\in\Phi | B_L=b_L; k )\\
=&\Pr_{\pi_L}(S_T\in\Phi | S_0=s_0; k )
\geq p
\end{align*}
The second equality holds because $\pi^{\mathrm{IL}}_{L}$ is identical to $\pi_{L}$ in the first $L$ steps.
The first inequality holds because CCIL iteratively applies $L$-step lookahead in Algorithm \ref{algo:CCIL} line 15 with the commitment probability achieved by the policy of the previous iteration calculated in line 12.
The modified program in Algorithm \ref{algo:CCIL} line 15 is always feasible because $\Pi_0 \subset \Pi_L$ which is shown in the proof of Theorem \ref{thm:Lzero}.
\end{proof}

\end{document}